\newcommand{\cmark}{\ding{51}}
\newcommand{\xmark}{\ding{55}}
\theoremstyle{plain}
\newtheorem{definition}{Definition}
\newtheorem{theorem}{Theorem}
\newtheorem{lemma}{Lemma}
\title{Reachability Analysis of Neural Network Control Systems}
\author {
    Chi Zhang\textsuperscript{\rm 1},
    Wenjie Ruan\textsuperscript{\rm 1}$^{\dagger}$,
    Peipei Xu \textsuperscript{\rm 2}
}
\begin{document}

\maketitle

\begin{abstract}
Neural network controllers (NNCs) have shown great promise in autonomous and cyber-physical systems. Despite the various verification approaches for neural networks, the safety analysis of NNCs remains an open problem. Existing verification approaches for neural network control systems (NNCSs) either can only work on a limited type of activation functions, or result in non-trivial over-approximation errors with time evolving. This paper proposes a verification framework for NNCS based on Lipschitzian optimisation, called DeepNNC. We first prove the Lipschitz continuity of closed-loop NNCSs by unrolling and eliminating the loops. We then reveal the working principles of applying Lipschitzian optimisation on NNCS verification and illustrate it by verifying an adaptive cruise control model. Compared to state-of-the-art verification approaches, DeepNNC shows superior performance in terms of efficiency and accuracy over a wide range of NNCs. We also provide a case study to demonstrate the capability of DeepNNC to handle a real-world, practical, and complex system. Our tool \textbf{DeepNNC} is available at \url{https://github.com/TrustAI/DeepNNC}.
\end{abstract}

\section{Introduction}

Neural network controllers have gained increasing interest in the autonomous industry and cyber-physical systems because of their excellent capacity of representation learning \cite{ding2019survey,huang2020survey,wang2022deep}. Compared to conventional knowledge-based controllers, learning-based NNCs not only simplify the design process, but also show superior performance in various unexpected scenarios~\cite{schwarting2018planning,kuutti2020survey}. Since they have already been applied in safety-critical circumstances, including autonomous driving~\cite{bojarski2016end,wu2021adversarial,yin2022dimba} and air traffic collision avoidance systems~\cite{julian2016policy}, verification on NNCSs plays a crucial role in ensuring their safety and reliability before their deployment in the real world. Essentially, we can achieve verification on NNCSs through estimating the system's reachable set. As Figure \ref{flow} shows, given the neural network controller, the dynamic model of the target plant, and the range $X_0$ of initial states, we intend to estimate the reachable sets of the system, that is, the output range of state variables over a finite time horizon $[0, t_n]$. If the reachable sets $X_{t1}$, $X_{t2}$, ... $X_{tn}$ have {\em no intersection} with the avoid set and $X_{tn}$ reaches the goal set, we can verify that the NNCS is safe.

\begin{figure}[t]
\begin{center}
\centerline{\includegraphics[width=0.65\columnwidth]{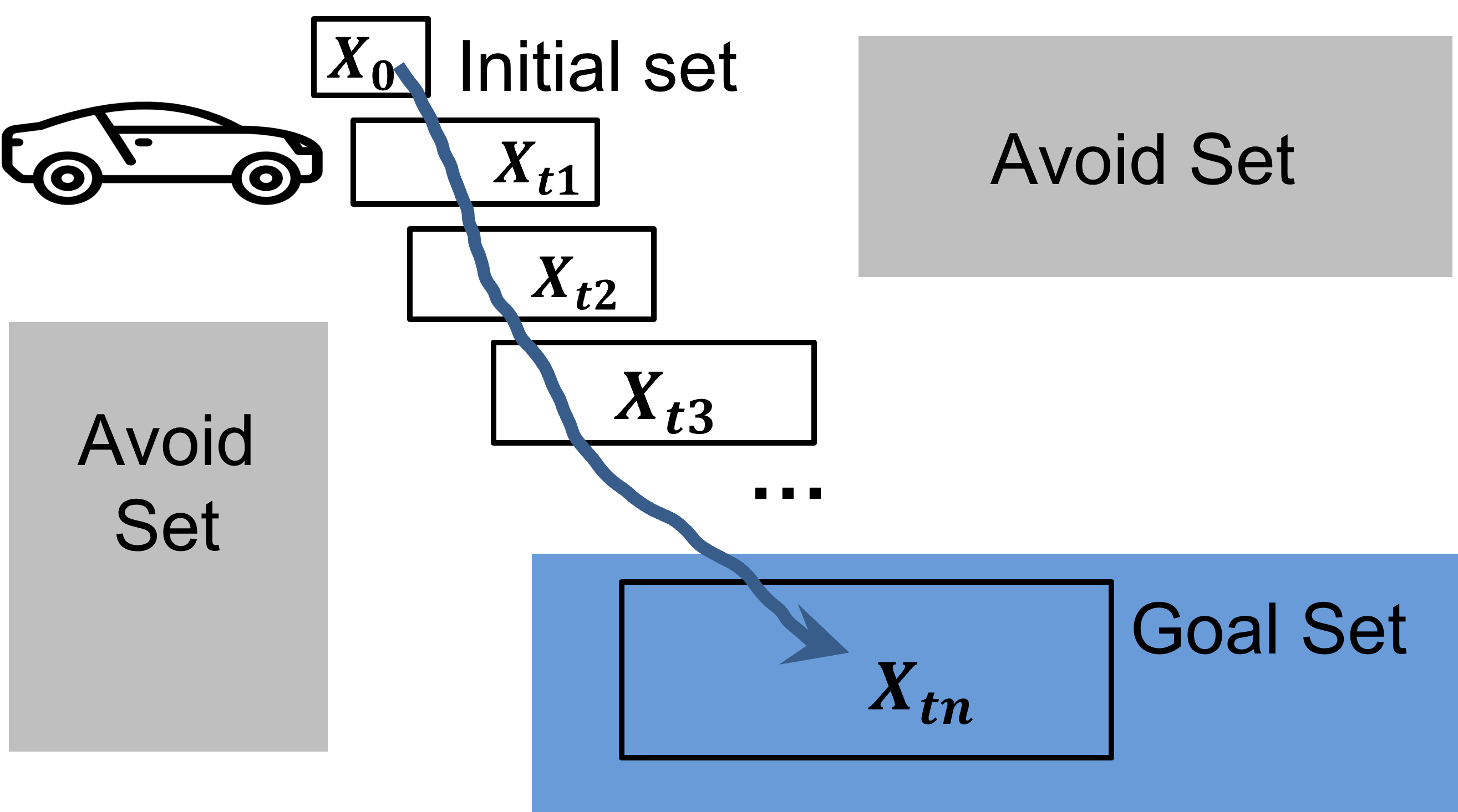}}
\caption{NNCS verification through reachable set estimation. {\em We estimate the reachable sets $X_{t1}, X_{t2}, X_{t3}$ ..., $X_{tn}$ at $t_1,t_2,t_3$ ..., $t_n$. If all reachable sets do not intersect with the grey avoid set and $X_{tn}$ reaches the blue goal set, the NNCS is verified to be safe.}}
\label{flow}
\end{center}
\vspace{-8mm}
\end{figure}


Compared to verification on NNCSs, verification on neural networks (NNs) is a relatively well-explored area~\cite{ruan2018reachability,ruan2019global,wu2020game,ruan2021adversarial,zhang2022proa}. Representative solutions can be categorised into constraint satisfaction based approaches~\cite{botoeva2020efficient, katz2019marabou, katz2017reluplex} and approximation-based approaches~\cite{singh2019abstract, gehr2018ai2, ryou2021scalable,mu20223dverifier}.
However, these verification approaches cannot be applied directly to NNCS. They are not workable in the scenario where the neural network is in close conjunction with other subsystems. A speciality of NNCSs is the association with control time steps. Moreover, approximation-based verification inevitably results in the accumulation of the over-approximation error with the control time evolved.

\begin{table*}[t]
  \centering
  \caption{Comparison with existing NNCS verification methods from multiple aspects}
  \label{tab-comparison}
  \scalebox{0.75}{
  \begin{tabular}{l|ccc ccc ccc}
     \toprule
     & \multicolumn{1}{|c}{\vtop{\hbox{\strut \textbf{Plant Dynamics}} }}
         &{\vtop{\hbox{\strut \textbf{Discrete/Continuous} }}}
         &{\vtop{\hbox{\strut  \textbf{Workable Activation Functions}  }}} 
         &{\vtop{\hbox{\strut  \textbf{Core Techniques} }}}  
         &{\vtop{\hbox{\strut  \textbf{Model-Agnostic Solution} }}}  \\ \hline 
        \textbf{SMC}~\shortcite{sun2019formal} &  {Linear} & {Discrete} & {ReLU} &  {SMC encoding + solver}&{ \xmark} \\
        \hline
        \textbf{Verisig}~\shortcite{ivanov2019verisig} &  {Linear, Nonlinear } & {Discrete, Continuous } & { Sigmoid, Tanh   } &\makecell{ TM + Equivalent hybrid \\system transformation} &{ \xmark} \\
        \hline
        \textbf{ReachNN}~\shortcite{huang2019reachnn} &  {Linear, Nonlinear }& { Discrete, Continuous } & { ReLU, Sigmoid, Tanh} &  {TM+ Bernstein polynomial  }&{ \xmark} \\
        \hline
        \textbf{Sherlock}~\shortcite{dutta2019reachability}&  {Linear, Nonlinear}& {Discrete, Continuous} & {ReLU} &{TM + MILP} &{ \xmark} \\
        \hline
        \textbf{ReachNN*}~\shortcite{fan2020reachnn}&  {Linear, Nonlinear}& {Discrete, Continuous} & { ReLU, Sigmoid, Tanh } &  \makecell{TM + Bernstein polynomial\\ + parallel computation}&{ \xmark}\\
        \hline
        \textbf{Verisig 2.0}~\shortcite{ivanov2021verisig}& {Linear, Nonlinear} & {Discrete, Continuous} & { Tanh, Sigmod}  &\makecell{TM + Schrink wrapping\\+ Preconditioning}&{ \xmark}\\
        \hline
        \textbf{\makecell{DeepNNC\\(Our work)}}  & \makecell{Any Lipschitz\\ continuous systems} & {Continuous} & \makecell{ Any Lipschitz continuous activations\\e.g., ReLU, Sigmoid, Tanh, etc.}  &{ Lipschitz optimisation }  &{ \cmark} \\
     \bottomrule
  \end{tabular}}
\end{table*}

Recently, some pioneering research has emerged to verify NNCSs. They first estimate the output range of the controller and then feed the range to a hybrid system verification tool. Although both groups of tools can generate a tight estimate on their isolated models, direct combination results in a loose estimate of NNCS reachable sets after a few control steps, known as the {\it wrapping effect}~\cite{neumaier1993wrapping}. 

To resolve the wrapping effect, researchers develop various methods to analyse NNCSs as a whole.  Verisig~\cite{ivanov2019verisig} transforms the neural network into an equivalent hybrid system for verification, which can only work on Sigmoid activation function. NNV~\cite{tran2020nnv} uses a variety of set representations, e.g., polyhedra and zonotopes, for both the controller and the plant. Other researchers are inspired by the fact that the Taylor model approach can analyse hybrid systems with traditional controllers~\cite{chen2012taylor}. Thus, they applied the Taylor model for the verification of NNCSs. Representatives are ReachNN~\cite{huang2019reachnn}, ReachNN*~\cite{fan2020reachnn}, Versig 2.0~\cite{ivanov2021verisig} and Sherlock~\cite{dutta2019reachability}. 
However, verification approaches based on Taylor polynomials are limited to certain types of activation function. And their over-approximation errors still exist and accumulate over time.

To alleviate the above weakness, this paper proposes a novel verification method for NNCSs, called DeepNNC, taking advantage of the recent advance in Lipschitzian optimisation. As shown in Table~\ref{tab-comparison}, compared to the state-of-the-art NNCS verification, DeepNNC can solve linear and nonlinear NNCS with a wide range of activation functions. 
Essentially, we treat the reachability problem as a black-box optimisation problem and verify the NNCS as long as it is Lipschitz continuous. DeepNNC is the only model-agnostic approach, which means that access to the inner structure of NNCSs is not required. DeepNNC has the ability to work on {\em any Lipschitz continuous complex system} to achieve efficient and tight reachability analysis.
The contributions of this paper are summarised below:

\begin{itemize}
\item This paper proposes a novel framework for the verification of NNCS, called DeepNNC. It is the first model-agnostic work that can deal with a broad class of hybrid systems with linear or nonlinear plants and neural network controllers with various types of activation functions such as ReLU, Sigmoid, and Tanh activation.

\item We theoretically prove that Lipschitz continuity holds on closed-loop NNCSs. 
DeepNNC constructs the reachable set estimation as a series of independent global optimisation problems, which can significantly reduce the over-approximation error and the wrapping effect. We also provide a theoretical analysis on the soundness and completeness of DeepNNC.

\item Our intensive experiments show that DeepNNC outperforms state-of-the-art NNCS verification approaches on various benchmarks in terms of both accuracy and efficiency. On average, DeepNNC is {\bf 768 times faster} than ReachNN*~\cite{fan2020reachnn}, {\bf 37 times faster} than Sherlock~\cite{dutta2019reachability}, and {\bf 56 times faster} than Verisig 2.0~\cite{ivanov2021verisig} (see Table~\ref{time consumption}).
\end{itemize}

\section{Related Work}\label{sec:related work}

We review the related works from the following three aspects.

\textbf{SMC-based approaches} transform the problem into an SMC problem~\cite{sun2019formal}. First, it partitions the safe workspace, i.e., the safe set, into imaging-adapted sets. After partitioning, this method utilises an SMC encoding to specify all possible assignments of the activation functions for the given plant and NNC. However, it can only work on discrete-time linear plants with ReLU neural controller.

\textbf{SDP-based and LP-based approaches}
SDP-based approach \cite{hu2020reach} uses a semidefinite program (SDP) for reachability analysis of NNCS. It
abstracts the nonlinear components of the closed-loop system by quadratic constraints and computes the approximate reachable sets via SDP. It is limited to linear systems with NNC. LP-based approach\cite{everett2021reachability} provides a linear programming-based formulation of NNCSs. It demonstrates higher efficiency and scalability than the SDP methods. However, the estimation results are less tight than SDP-based methods.

\textbf{Representation-based approaches} use the representation sets to serve as the input and output domains for both the controller and the hybrid system. The representative method is NNV~\cite{tran2020nnv}, which has integrated various representation sets, such as polyhedron ~\cite{tran2019safety,tran2019parallelizable}, star sets~\cite{tran2019star} and zonotop~\cite{singh2018fast}. For a linear plant, NNV can provide exact reachable sets, while it can only achieve an over-approximated analysis for a nonlinear plant.

\textbf{Taylor model based approaches} approximate the reachable sets of NNCSs with the Taylor model (TM). Representative methods are Sherlock~\cite{dutta2019reachability} 
and Verisig 2.0~\cite{ivanov2021verisig}. The core concept of the Taylor model is to approximate a function with a polynomial and a worst-case error bound. TM approximation has shown impressive results in the analysis of the reachability of hybrid systems with conventional controllers~\cite{chen2012taylor,chen2013flow} and the corresponding tool Flow*\cite{chen2013flow} is widely applied.
Verisig~\cite{ivanov2019verisig} makes use of the tool Flow*\cite{chen2013flow} by transforming the NNC into a regular hybrid system without a neural network. It proves that the Sigmoid is the solution to a quadratic differential equation and applies Flow*~\cite{chen2013flow} on the reachability analysis of the equivalent system.
Instead of directly using the TM verification tool, ReachNN~\cite{huang2019reachnn}, ReachNN*~\cite{fan2020reachnn} and Verisig 2.0~\cite{ivanov2021verisig} maintain the structure of NNCS and approximate the input and output of NNC by a polynomial and an error band, with a structure similar to the TM model. Therefore, the reachable sets of controller and plant share the same format and can be transmitted and processed in the closed loop. 

Table~\ref{tab-comparison} presents a detailed comparison of DeepNNC with its related work. The core technique of our method is significantly different from existing solutions, enabling DeepNNC to work on any neural network controlled system as long as it is Lipschitz continuous.

\section{Problem Formulation}
\label{problem formulation}

This section first describes the NNCS model and then formulates the NNCS reachability problem. We deal with a closed-loop neural network controlled system as shown in Figure \ref{nncs}. It consists of two parts, a plant represented by a continuous system $\dot{x}=f(x,u)$ and a neural network controller $u=\sigma(y)$. The system works in a time-triggered manner with the control step size $\delta$.

\begin{figure}[ht]
\begin{center}
\centerline{\includegraphics[width=0.65\columnwidth]{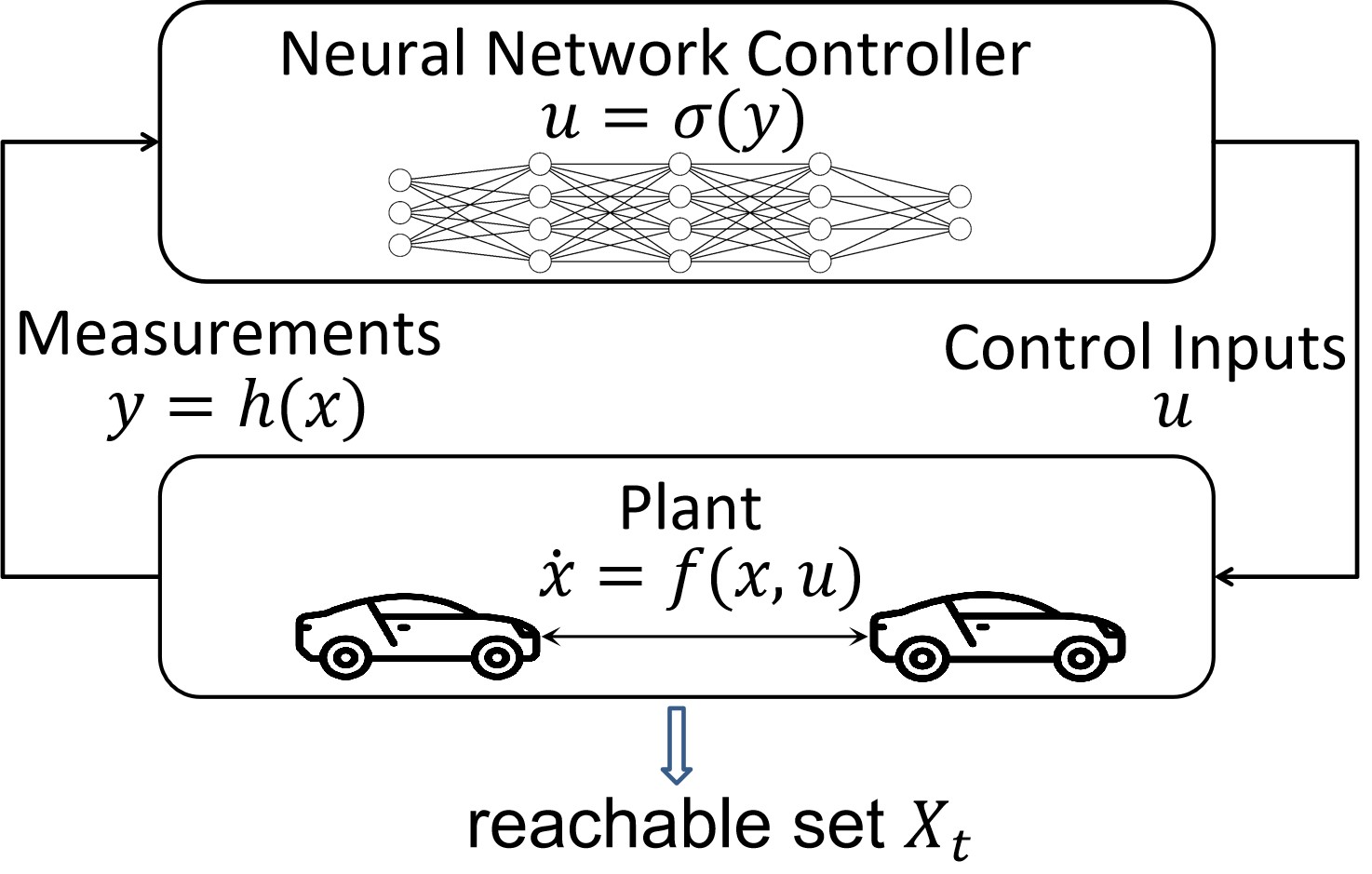}}
\caption{Architecture of a neural network controlled system. {\it Neural network controller $\delta$ takes measurements $y$ as input and initiates the control input $u$. The plant updates its states according to its dynamic $\dot{x}=f(x,u)$ and generates new measurements of the states to be fed to the controller.}}
\label{nncs}
\end{center}
\vspace{-8mm}
\end{figure}


\begin{definition}[Plant]\label{def-plant}
{\it
In this work, we specify the plant $P$ as a continuous system as:
\begin{equation}\small
    \dot{x}=f(x,u), \quad  \ x\in\mathbb{R}^n \ \text{and} \ u\in\mathbb{R}^m
\end{equation}
with state $x$ containing $n$ variables, 
and control input $u$ consisting of $m$ variables.
~For the time-triggered controller, the plant dynamic inside a control time step is assigned as
\begin{equation}\small
\label{dynamic interval}
       \dot{x}(t)=f(x(t),u(i\delta)) 
\end{equation}
with $  i=0,1,2,..., $ and $ t\in[i\delta, (i+1)\delta] $.
}
\end{definition}
In control systems, the plant is usually modelled in the form of an ordinary differential equation (ODE). To guarantee the existence of a unique solution in ODE, the function $f$ to model the plant is required to be Lipschitz continuous in $x$ and $u$ \cite{meiss2007differential}.



\begin{definition}[Neural Network Controller]\label{def-nnc}
{\it
The neural network controller is a $k$ layer feedforward neural network with one or multiple types of activation functions, such as Sigmoid, Tanh, and ReLU activations, which can be defined as:
\begin{equation}\small
    \sigma(y)= \sigma_{k}\circ\sigma_{k-1}\circ...\circ\sigma_{1}(y).
\end{equation} 
}
\end{definition}

As shown in Figure~\ref{nncs}, system transmits the measurements $y$ to the NNC, forming the control input through $u=\sigma(y)$. The measurement function $y=h(x)$ is usually Lipschitz continuous, and some systems use $y=x$ for simplicity \cite{ge2013stable}.

{\it 
\begin{definition}[Neural Network Controlled System]
\label{nncs def}
The neural network controlled system $\phi$ is a closed-loop system composed of a plant $P$ and a neural network controller $\sigma$. In a control time step, the closed-loop system is characterised as:
\begin{equation}\small
\begin{split}
    \phi(x(i\delta),\Delta t)  & \equiv\ x(i\delta+\Delta t) \\ =  x(i\delta) & +\int_{i\delta}^{i\delta+\Delta t} \ f(x(\tau), \sigma(h(x(i\delta)))) \ d \tau
    \end{split}
\end{equation}
with $i=0,1,2,3..., $ and $ \Delta t\in[0, \delta]$. For a given initial state $x_0\in\mathcal{X}_0$ at $t=0$ and a specified NNCS $\phi$, the state $x$ at time $t\ge 0$ is denoted as $x(t)\equiv\phi(x_0,t)$.
\end{definition}
}
{\it
\begin{definition}[Reachable Set of NNCS]
Given a range $\mathcal{X}_0$ for initial states and a neural network controlled system $\phi$, we define the region $S_t$ a reachable set at time $t\ge 0$ if all reachable states can be found in $S_t$.
\begin{equation}\small
{\forall}x_0\in \mathcal{X}_0, \ {\exists} \ \phi(x_0,t)\in S_t
\end{equation}
\end{definition}
}
It is possible that an over-approximation of reachable states exists in the reachable set. In other words, all reachable states of time $t$ stay in $S_t$, while not all states in $S_t$ are reachable. Furthermore, we can formulate the estimation of the reachable set of NNCS as an optimisation problem:

\begin{definition}[Reachability of NNCS]
Let the range $\mathcal{X}_0$ be initial states $x_0$ and $\phi$ be a NNCS. The reachability of NNCS at a predefined time point $t$ is defined as the reachable set $S_t(\phi,X_0,\epsilon) = [l,u]$ of NNCS $\phi$ under an error tolerance $\epsilon \geq 0$ such that 
\begin{equation}\label{eqn:pr-1}
\begin{split}
   \inf_{x_0 \in X_0} \phi(x_0,t) - \epsilon \leq l \leq  \inf_{x_0 \in X_0} \phi(x_0,t)  + \epsilon 
 \\ \sup_{x_0 \in X_0} \phi(x_0,t) - \epsilon  \leq  u \leq \sup_{x_0 \in X_0} \phi(x_0,t) + \epsilon 
 \end{split}
\end{equation}
%
\end{definition}
As discussed in multiple works~\cite{ruan2018reachability,katz2017reluplex}, the reachability problem in neural networks is extremely challenging, it is an NP-complete problem.

\section{Reachability Analysis via Lipschitzian Optimisation}
\label{lcnncs}

This section presents a novel solution taking advantage of recent advances in Lipschitzian optimisation~\cite{gergel2016adaptive,huang2022bridging}. But before employing Lipschitzian optimisation, we need to theoretically prove the Lipschitz continuity of the NNCS.

\subsection{Lipschitz Continuity of NNCSs}

\begin{theorem}[Lipschitz Continuity of NNCS]\label{lip_continue_nncs}
Given an initial state $X_0$ and a neural network controlled system $\phi(x,t)$ with controller $\sigma(x)$ and plant $f(x,u)$,  if $f$ and $\sigma$ are Lipschitz continuous, then the system $\phi(x,t)$ is Lipschitz continuous in initial states $\mathcal{X}_0$. A real constant $K\geq0$ exists for any time point $t>0$ and for all $x_0, y_0 \in \mathcal{X}_0$:
\begin{equation}\small
\label{lip_continue_nccs}
  \left|\phi(x_0,t)-\phi(y_0,t)\right| \le K \left|x_0-y_0 \right|
\end{equation}
The smallest $K$ is the best Lipschitz constant for the system, denoted $K_{best}$.
\end{theorem}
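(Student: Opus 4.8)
The plan is to prove Lipschitz continuity of the closed-loop map $\phi(\cdot,t)$ by decomposing it control step by control step, showing each step is Lipschitz, and composing. Within a single control interval $[i\delta,(i+1)\delta]$, the control input $u = \sigma(h(x(i\delta)))$ is \emph{frozen} — it depends only on the state at the start of the step — so the dynamics reduce to an ODE $\dot x(t) = f(x(t), u)$ with a constant parameter $u$. The standard tool here is Gr\"onwall's inequality: if $f$ is Lipschitz in $x$ with constant $L_f$, then two solutions starting from $x_0$ and $y_0$ with the \emph{same} control satisfy $|x(i\delta+\Delta t) - y(i\delta + \Delta t)| \le e^{L_f \Delta t}|x(i\delta)-y(i\delta)|$ for $\Delta t \in [0,\delta]$. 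But the controls are \emph{not} the same when the starting states differ, so I also need the dependence of the ODE solution on the parameter $u$: again by Gr\"onwall, perturbing $u$ by $\delta u$ changes the solution over a time $\delta$ by at most $C(L_f,\delta)\,|\delta u|$. Composing with the Lipschitz constant $L_\sigma$ of the controller and $L_h$ of the measurement map, one step of the closed loop is Lipschitz with some constant $K_1 = K_1(L_f, L_\sigma, L_h, \delta)$.

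\textbf{Key steps in order.} First, fix a single control interval and write $\phi$ restricted to it as the composition $x(i\delta) \mapsto \big(x(i\delta), \sigma(h(x(i\delta)))\big) \mapsto x(i\delta + \Delta t)$, i.e., the flow of $\dot x = f(x,u)$ evaluated at the frozen $u$. Second, bound $|x(i\delta+\Delta t) - y(i\delta+\Delta t)|$ by splitting into two contributions — different initial conditions with common control, and common initial condition with different control — and apply Gr\"onwall's inequality to each; use that $f$ is Lipschitz jointly in $(x,u)$ to control the second. Third, combine with $|\sigma(h(x(i\delta))) - \sigma(h(y(i\delta)))| \le L_\sigma L_h |x(i\delta) - y(i\delta)|$ to get a per-step Lipschitz constant $K_1$. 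Fourth, for a general time $t$ with $t = N\delta + \Delta t$, write $\phi(\cdot,t)$ as the $N$-fold composition of per-step maps followed by a partial step, and take $K = K_1^{N+1}$ (or $K_1^N \cdot K_1$); since $N = \lfloor t/\delta \rfloor$ is finite for any finite $t$, $K$ is a finite real constant, proving the bound in \eqref{lip_continue_nccs}. Finally, note that the set of valid Lipschitz constants is closed and bounded below by $0$, so an infimum $K_{best}$ exists and is attained.

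\textbf{Main obstacle.} The delicate point is handling the \emph{interaction} between the initial-state perturbation and the control perturbation cleanly — naively one gets a product of two Gr\"onwall factors and must be careful that the ``frozen control'' structure is genuinely used (the control does \emph{not} vary continuously in time within a step, which is what makes the argument tractable: there is no feedback inside a step). A secondary subtlety is that over many steps the constant $K_1^N$ grows exponentially in $t$; this is fine for the \emph{statement} (which only asserts existence of \emph{some} $K$ for each fixed $t$), but it is worth flagging that the bound is pessimistic and that the usefulness of DeepNNC does not rely on $K$ being small — only on it being finite and, ideally, estimable from samples. I would also make explicit the assumption (stated earlier) that $h$ is Lipschitz, since the theorem statement only mentions $f$ and $\sigma$; in the common case $y = x$ this is trivial with $L_h = 1$.
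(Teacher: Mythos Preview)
Your proposal is correct and follows essentially the same route as the paper: unroll the closed loop into a sequence of control steps, within each step exploit that the control is frozen so the dynamics are an ODE with a fixed parameter, use the add-and-subtract/triangle trick together with Gr\"onwall's inequality to get a per-step Lipschitz bound of the form $(1 + L_u L_n \Delta t)\,e^{L_x \Delta t}$, and then compose across the finitely many steps up to time $t$. The only cosmetic difference is that the paper inserts the triangle inequality \emph{inside the integrand} (adding and subtracting $f(x(\tau),u_y)$) and applies Gr\"onwall once, whereas you route through an intermediate trajectory and apply Gr\"onwall twice; the resulting constants and the argument structure are the same.
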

\begin{proof}\let\qed\relax(sketch)
The essential idea is to open the control loop and further demonstrate that the system $\phi(x_0,t)$ is Lipschitz continuous on $x_0$. We duplicate the NNC and plant blocks to build an equivalent open-loop control system. 
Based on Definition \ref{nncs def}, we have
\begin{equation}\small
\label{diff}
\begin{split}
 & \left |\phi(x_0,t)- \phi(y_0,t) \right |  \le   \left |x_0-y_0\right | \\ + & \int_{t_0}^{t}  \left| \ f(x(\tau), u_x(\tau)) \ -  \ f(y(\tau), u_y(\tau)) \right| d \tau  
\end{split}
\end{equation}
We add the term $f(x(\tau), u_y(\tau)) \ - \ f(x(\tau), u_y(\tau))$ to the right side and assume that the Lipschitz constant of $f$ on $x$ and $u$ are $L_u$ and $L_x$ and the Lipschitz constant of NN as $L_n$, we can transform Equation~(\ref{diff}) to the following form:
\begin{equation}\small
\begin{split}
    &  \left\|\phi(x_0,t) - \phi(y_0,t) \right\| \le \\ 
   & ( L_u L_n (t-t_0) +1 ) \left\|x_0-y_0\right\| 
   +   L_x \int_{t_0}^{t} \left\| \  x(\tau) - y(\tau) \  \right\| \  d \tau.
\end{split}
\end{equation}
Based on Grönwall's theory of integral inequality \cite{gronwall}, we further have the following equation:
\begin{equation}\small
\label{1_step}
\begin{split}
& \left\|\phi(x_0,t)- \phi(y_0,t) \right\|  \le  ( L_u L_n (t-t_0) +1 )  \left\|x_0-y_0\right\| e^{L_x(t-t_0)}. 
\end{split}
\end{equation}
Hence, we have demonstrated the Lipschitz continuity of the closed-loop system on its initial states with respect to the time interval $[0,\delta]$. We repeat this process through all the control steps and prove the Lipschitz continuity of NNCSs on initial states. See {\bf Appendix-A}\footnote{All appendixes of this paper can be found at \url{https://github.com/TrustAI/DeepNNC/blob/main/appendix.pdf}} for a detailed proof.
~~~\qedsymbol{}
\end{proof}

Based on Theorem~\ref{lip_continue_nncs}, we can easily have the following lemma about the local Lipschitz continuity of the NNCS.
\begin{lemma}[Local Lipschitz Continuity of NNCS]
\label{local_lip}
If NNCS $\phi(x,t)$ is Lipschitz continuous throughout $X_0$, then it is also locally Lipschitz continuous in its sub-intervals. There exists a real constant $k_i\geq0$ for any time point $t>0$, and $\forall x_0, y_0 \in[a_i,a_{i+1}]\subset X_0$, we have
$  \left\|\phi(x_0,t)-\phi(y_0,t)\right\| \le k_i \left\|x_0-y_0 \right\|  $.
\end{lemma}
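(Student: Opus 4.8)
The plan is to derive this directly from Theorem~\ref{lip_continue_nncs} by restricting the domain. First I would fix an arbitrary sub-interval $[a_i,a_{i+1}]\subset X_0$ and an arbitrary time point $t>0$. Since every pair $x_0,y_0\in[a_i,a_{i+1}]$ also lies in $X_0$, Theorem~\ref{lip_continue_nncs} immediately gives $\left\|\phi(x_0,t)-\phi(y_0,t)\right\|\le K\left\|x_0-y_0\right\|$ with the same global constant $K$. Hence the set of admissible Lipschitz constants on $[a_i,a_{i+1}]$ is non-empty, so it admits a greatest lower bound $k_i\ge 0$, which is by definition the best local Lipschitz constant on that cell, and it satisfies $k_i\le K$. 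This already establishes the inequality in the statement.

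Next I would make the time quantifier precise. The bound must hold for every $t>0$ in the finite horizon $[0,t_n]$, so one can either take $k_i$ as the supremum over $t\in[0,t_n]$ of the per-time best local constants, or, more concretely, invoke the explicit estimate $(L_uL_n(t-t_0)+1)\,e^{L_x(t-t_0)}$ obtained in the proof of Theorem~\ref{lip_continue_nncs}: it is monotone in $t$, so its value at $t=t_n$ dominates all earlier times and serves as a single finite $k_i$ valid throughout the horizon. Either way the constant is finite because the horizon is bounded, and the displayed inequality follows.

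Finally, although $k_i=K$ already suffices, I would note why the lemma is worth stating: for any partition $\{[a_i,a_{i+1}]\}$ of $X_0$ one has $\max_i k_i\le K_{best}$, and this is typically strict, so replacing the single global constant by a family of local constants on a fine partition yields tighter reachable-set over-approximations, which is exactly what the Lipschitzian optimisation routine exploits. The only subtlety — not really an obstacle — is the reading of ``locally Lipschitz'': here it means Lipschitz on each cell of a partition into sub-intervals rather than the purely topological notion, so no compactness or covering argument is needed beyond the trivial domain restriction; the single point requiring mild care is handling ``for any time point $t>0$'' uniformly, which the monotone exponential bound from Theorem~\ref{lip_continue_nncs} settles.
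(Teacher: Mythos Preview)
Your proposal is correct and matches the paper's own treatment: the paper gives no explicit proof of this lemma at all, merely stating that it follows ``easily'' from Theorem~\ref{lip_continue_nncs}, which is precisely your domain-restriction argument. Your additional remarks on handling the time quantifier uniformly and on why the local constants $k_i$ can be strictly smaller than $K_{best}$ go beyond what the paper writes, but they are accurate and helpful context rather than a different approach.
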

Based on Lemma~\ref{local_lip}, we develop a fine-grained strategy to estimate local Lipschitz constants, leading to faster convergence in Lipschitz optimisation.

\subsection{Lipschitzian Optimisation}

In the reachability analysis, the primary aim is to provide the lower bound $R$ of the minimal value $\phi(x),x\in X_0$. We achieve tighter bounds by reasonably partitioning the initial input region $X_0$ and evaluating the points at the edges of the partition. The pseudocode of the algorithm can be found in {\bf Appendix-B}. Here, we explain the partition strategy and the construction of $R$ in the $k$-th iteration.

\begin{figure}[t]
\begin{center}
\centerline{\includegraphics[width=0.9\columnwidth]{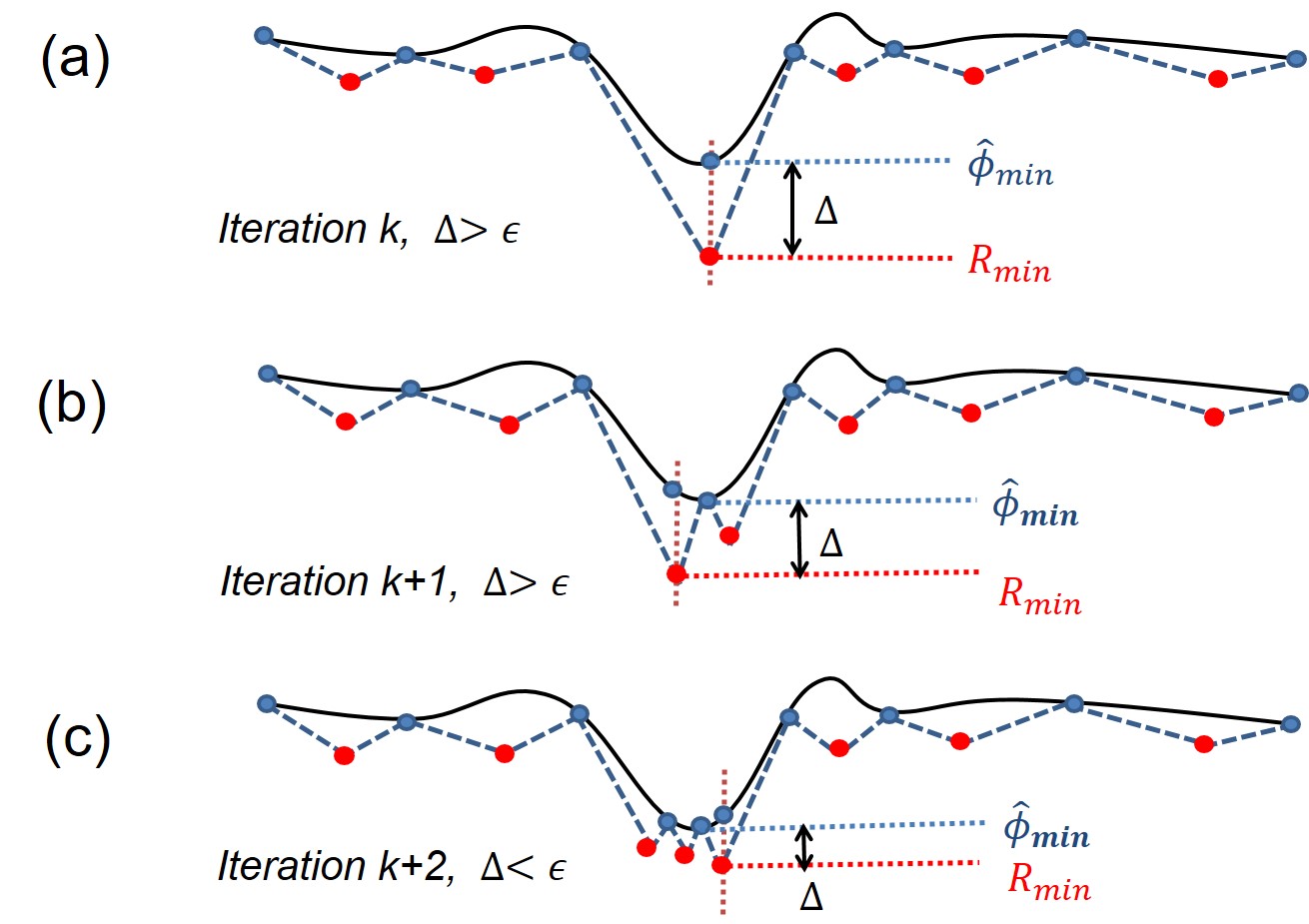}}
\caption{Demonstration of the optimisation process: {\it $|\hat{\phi}_{min}^k-R_{min}^k|$ decreases with iteration $k$, optimisation converges when $|\hat{\phi}_{min}^k-R_{min}^k|<\epsilon$}}
\label{lip_opt}
\end{center}
\vspace{-8mm}
\end{figure}

\begin{itemize}[leftmargin=*]
\item[i)] We sort the input sub intervals $D_0,D_1,D_2...D_n$, with  $D_i=[a_i, a_{i+1}]$. In the first iteration, we only have one sub interval with $D_0=X_0=[a_0,a_1]$. 
\item[ii)] 
We evaluate the system output $\phi_i=\phi_t(a_i)$ at the edge points $a_0, ...a_n$, and define $R_i$ as the character value of the interval $D_i$ with the corresponding input $x^{R}_i$:
\begin{equation}\small
     R_{i}=\frac{\phi_{i}+\phi_{i+1}}{2}+l_i\frac{a_{i}-a_{i+1}}{2}
\end{equation}
\begin{equation}\small
    x^R_{i}=\frac{\phi_{i}-\phi_{i+1}}{2l_i}+\frac{a_{i+1}+a_{i}}{2}.
\end{equation}
Both $R_i$ and $x^{R}_i$ are determined by the information of the edge point 
and local Lipschitz constant $l_i$.
\item[iii)] We search for the interval with a minimum character value and minimal evaluated edge points:
\begin{equation}\small
\begin{split}
      &   R_{min}^k=\min\{R_0,...,R_n\}, \ j=\arg \min \{R_0,...,R_n\} \\
    & D_j=[a_j,a_{j+1}] , \ \hat{\phi}_{min}^k=\min\{\phi_0,\phi_1,...\phi_n\}.
     \end{split}
\end{equation}
If $\hat{\phi}_{min}^k-R_{min}^k \le \epsilon$, the optimisation terminates, and we return $R_j$ for the lower bound. Otherwise, we add a new edge point $a_{new}= x^R_{j}$ to divide $D_j$.
\end{itemize}

As illustrated in Figure \ref{lip_opt}, with blue points as edges and red points indicating the characteristic value $R$. Both $R_{min}^k$ and $\hat{\phi}_{min}^k$ are approaching the real minimum value $\phi_{min}$. The optimisation ends when $\hat{\phi}_{min}^k-R_{min}^k \le \epsilon$, which implies $ R_{min}^k\le \phi_{min} \le R_{min}^k+\epsilon$

To extend the above strategy into a multi-dimensional case, we use the nested optimisation scheme to solve the problem in a recursive way.
\begin{equation}\small
\min\limits_{x \in [p_i,q_i]^n}~~\phi(x) =
\min\limits_{x_1\in [p_1,q_1]}... \min\limits_{x_n\in [p_n,q_n]} \phi(x_1,...,x_n)
\end{equation}
We define the $d$-th level optimisation sub-problem,
\begin{equation}\small
w_d(x_1,...,x_d) = \min_{x_{d+1}\in [p_{d+1},q_{d+1}]} w_{d+1}(x_1,..., x_{d+1}) 
\end{equation}
		and for $d=n$, $w_n(x_1,...,x_n) = \phi(x_1,x_2,...,x_n).$
Thus, we have $\min_{{x} \in [p_i,q_i]^n}~~\phi({x}) = \min_{x_1\in [p_1,q_1]} \phi_1(x_1)$ which is actually a one-dimensional optimisation problem.


\subsection{Convergence Analysis}

We discuss the convergence analysis in two circumstances, i.e., one-dimensional problem and multidimensional problem. In the one-dimensional problem,
we divide the sub-interval $D_j=[a_j,a_{j+1}]$ into $D_{new}^1=[a_j,x_j^R]$ and $D_{new}^2=[x_j^R, a_{j+1}]$ in the $k$-th iteration. The new character value
 $    R_{new}^1-R_j=l_j\frac{a_{j+1}-x_{j}^R}{2}-\frac{\phi_{j+1}-\phi(x_j^R)}{2}>0$ ,
  $   R_{new}^2-R_j=l_j\frac{x_{j}^R-a_{j}}{2}-\frac{\phi(x_j^R)-\phi_{j}}{2}>0$.
Since $R_{min}^k=\min\{R_0,...R_n\}\setminus\{R_j\}\}\cup\{R_{new}^1, R_{new}^2\}$, we confirm that $R_{min}^k$ increases strictly monotonically and is bounded. 

In the multidimensional problem, the problem is transformed to a one-dimensional problem with nested optimisation.  We introduce Theorem~\ref{inductive} for the inductive step.

\begin{theorem}\label{inductive}
In optimisation, if $\forall {x}\in \mathbb{R}^d$, $\lim_{k\to \infty}R_{min}^k = \inf_{{x}\in [a,b]^d} \phi({x})$ and $\lim_{i\to \infty}(\hat{\phi}_{min}^k-R_{min}^k) = 0$ are satisfied, then $ \forall x\in \mathbb{R}^{d+1}$, $\lim_{k\to \infty}R_{min}^k = \inf_{{x}\in [a,b]^{d+1}} \phi({x})$ and $\lim_{k\to \infty}(\hat{\phi}_{min}^k-R_{min}^k) = 0$ hold.
\end{theorem}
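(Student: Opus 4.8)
The plan is to prove Theorem~\ref{inductive} by induction on the dimension, using the already-established one-dimensional case as the base. (That case is exactly the paragraph preceding the theorem: $R_{min}^k$ is strictly increasing and bounded above by $\phi_{min}$, hence convergent, while $\hat{\phi}_{min}^k$ is non-increasing with $\hat{\phi}_{min}^k \to \phi_{min}$, so the gap tends to $0$.) For the inductive step I would view the $(d{+}1)$-dimensional minimisation through the nested decomposition already introduced: set $\phi_1(x_1) := \inf_{(x_2,\dots,x_{d+1})\in[a,b]^d}\phi(x_1,\dots,x_{d+1})$, so that $\inf_{x\in[a,b]^{d+1}}\phi(x)=\inf_{x_1\in[a,b]}\phi_1(x_1)$, and the outer loop of the algorithm is precisely a one-dimensional Lipschitzian optimisation of $\phi_1$ in which each ``evaluation'' $\phi_1(a_i)$ at an edge point is itself produced by the $d$-dimensional solver, to which the induction hypothesis applies.

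First I would record two structural facts. (a) \emph{Regularity of the reduced objective:} $\phi_1$ is Lipschitz on $[a,b]$ with a constant $L$ no larger than a Lipschitz constant of $\phi$ — for $x_1,x_1'$, picking a near-minimiser $y$ of the inner problem at $x_1'$ gives $\phi_1(x_1)-\phi_1(x_1')\le \phi(x_1,y)-\phi(x_1',y)+o(1)\le L|x_1-x_1'|+o(1)$, and symmetrically — so together with Lemma~\ref{local_lip} the local constants $l_i$ used in the outer loop remain valid upper bounds for $\phi_1$, which is what the one-dimensional monotonicity argument needs. (b) \emph{Perturbation stability:} if at each edge point the inner solver returns $\tilde\phi_i$ with $|\tilde\phi_i-\phi_1(a_i)|\le\epsilon_{\mathrm{in}}$, then the resulting outer quantities $\tilde R_i$, $\tilde R_{min}^k$, $\hat{\tilde\phi}_{min}^k$ differ from the ideal ones (computed with exact $\phi_1$) by at most $\epsilon_{\mathrm{in}}$, because $R_i=\tfrac12(\phi_i+\phi_{i+1})+l_i\tfrac{a_i-a_{i+1}}{2}$ depends on the evaluations only through an average, i.e.\ is $1$-Lipschitz in the evaluation errors.

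Then I would assemble the argument with an $\epsilon$--$\delta$ (diagonal) scheme. Fix $\eta>0$. By the induction hypothesis, the $d$-dimensional solver at any fixed edge point converges, so one can choose an inner iteration budget that forces $\epsilon_{\mathrm{in}}\le\eta/3$ at every edge point the outer loop ever queries (finitely many at any finite outer stage). Feeding these approximate evaluations into the one-dimensional routine, fact (b) reduces the outer behaviour to the \emph{exact} one-dimensional optimisation of $\phi_1$ up to an additive $\eta/3$; applying the one-dimensional convergence result (valid here by fact (a)) there is an index $K$ beyond which the ideal outer gap is below $\eta/3$ and the ideal $R_{min}^k$ is within $\eta/3$ of $\inf_{x_1}\phi_1(x_1)=\inf_{[a,b]^{d+1}}\phi$. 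Combining the three $\eta/3$ terms yields $|R_{min}^k-\inf_{[a,b]^{d+1}}\phi|<\eta$ and $\hat\phi_{min}^k-R_{min}^k<\eta$ for all $k\ge K$, which is the assertion.

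The main obstacle is exactly this coupling of the two approximation levels: the outer one-dimensional routine is not optimising a fixed function but one whose values are only available to finite (albeit improvable) inner accuracy, so one must show that inner and outer precisions can be driven to zero simultaneously without the errors compounding. Fact (b) — the $1$-Lipschitz dependence of the outer character values on the evaluation errors — is what prevents blow-up; once it is in hand, the remainder is routine $\epsilon/3$ bookkeeping layered on top of the one-dimensional case, and a secondary check that the local estimates $l_i$ stay admissible for $\phi_1$, which follows from fact (a) and Lemma~\ref{local_lip}.
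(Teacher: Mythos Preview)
Your proposal is correct and follows essentially the same route as the paper: reduce the $(d{+}1)$-dimensional problem via the nested scheme to a one-dimensional optimisation of the inner-value function $W(x)=\phi_1(x)$, treat the inductive hypothesis as supplying inexact evaluations of $W$ with controllable error $\epsilon_{\mathbf{y}}$, and combine this with the outer one-dimensional error $\epsilon_x$ to bound the distance to the true minimum. Your facts (a) and (b) and the $\eta/3$ bookkeeping make explicit precisely the steps the paper's sketch compresses into the inequalities $w^*-R_{min}^k\le\epsilon_{\mathbf{y}}+\epsilon_x$ and $\widehat{W}^k_{min}-w^*\le\epsilon_{\mathbf{y}}+\epsilon_x$.
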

\begin{proof}\let\qed\relax(sketch)
By the nested optimisation scheme, we have
\begin{equation}\small
    	\min_{\mathbf{x} \in [a_i,b_i]^{d+1}}\phi(\mathbf{x}) =\min_{x \in [a,b]} W(x); ~~~
	W(x) = \min_{\mathbf{y} \in [a_i,b_i]^d} \phi(x,\mathbf{y})
\end{equation}
	Since $\min_{\mathbf{y} \in [a_i,b_i]^d} \phi(x,\mathbf{y})$ is bounded by an interval error $\epsilon_{\mathbf{y}}$, then we have
	 $|W(x) - W^*(x)|\leq \epsilon_{\mathbf{y}}, \forall x \in [a,b]$, where $W^*(x)$ is the accurate evaluation of the function.
  
For the inaccurate evaluation case, we have $W_{min} = \min_{x\in [a,b]} W(x)$,  $R_{min}^{k}$ and $\widehat{W}_{min}^k$. The termination criteria for both cases are $|\widehat{W}^k_{min}-R_{min}^{k}|\leq \epsilon_x$ and $|\widehat{W}^{k*}_{min}-R_{min}^{k*}|\leq \epsilon_x$, and $w^*$ represents the ideal global minimum. 
	Based on the definition of $R_{min}^k$,  we have $w^* -R_{min}^k \leq \epsilon_{\mathbf{y}} +\epsilon_x$. By analogy, we get $\widehat{W}^k_{min}-w^* \leq \epsilon_{\mathbf{y}} +\epsilon_x$. Thus, the accurate global minimum is bounded. Theorem~\ref{inductive} is proved. See {\bf Appendix C} for a detailed proof. ~~~\qedsymbol{}
\end{proof}



\subsection{Estimation of Lipschitz Constant}\label{lip_estimation}

To enable a fast convergence of the optimisation, we need to provide a Lipschitz constant for NNCS. As we aim for a model-agnostic reachability analysis for NNCSs, we further propose two practical variants to estimate the Lipschitz constant for a black-box NNCS.


\subsubsection{Dynamic Estimation of Global Lipschitz Constant}

In the optimisation process, the entire input range $X$ of the function $\phi_t(x)$ is divided into limited sub-intervals $D_0,D_1,D_2...D_n$, where $D_i=[a_i, a_{i+1}]$. We update the global Lipschitz constant $L$ according to the interval partitions:
\begin{equation}\small
  L= r\cdot max \bigg\lvert \frac{\phi_t(a_{i+1})-\phi_t(a_i)}{a_{i+1}-a_i} \bigg\rvert
\end{equation}
To avoid an underestimate of $L$, we choose $r>1$ and have
\begin{equation}\small
    \lim_{j \to \infty} r\cdot \max_{i = 1,...,j-1} \bigg\lvert \frac{\phi_t(a_{i+1})-\phi_t(a_i)}{a_{i+1}-a_i} \bigg\rvert = r \cdot\sup_{a\in X} {\dfrac{d\phi_t}{da}} > K_{best}.
\end{equation}
The dynamic update of $L$ will approximate the best Lipschitz constant of the NNCS.




\subsubsection{Local Lipschitz Constant Estimation} \label{local_lip_estimation}
An alternative to improve efficiency is the adoption of the local Lipschitz constant $l_i$.
~We adopt the local adjustment \cite{sergeyev1995information} for local Lipschitz estimation, which considers not only global information, but also neighbourhood intervals. For each sub-interval, we introduce $m_i$
\begin{equation}\small
    m_i= | \phi_t(a_{i+1})-\phi_t(a_i)|/|a_{i+1}-a_i| , M=\max \ m_i
\end{equation}
We calculate sub-interval sizes $d_i$ and select the largest $D$.
\begin{equation}\small
d_i=\left|a_{i+1}-a_i\right|, 
D= \max \ d_i
\end{equation}

Equation (\ref{local_lip_cons}) estimates the local Lipschitz constant.
\begin{equation}\small
\label{local_lip_cons}
    l_i=r \cdot \max \left\{ m_{i-1}, m_i, m_{i+1}, M \cdot {d_i} /{D} \right\}
\end{equation}
It balances the local and global information. When the sub-interval is small, that is, $ {M\cdot d_i/D} \to 0$, $l_i$ is decided by local information. \begin{small}$$\lim_{d_i \to 0} l_i=r \cdot max \left\{ m_{i-1}, m_i, m_{i+1}\right\}=r \cdot\sup_{a\in [a_i,a_{i+1}]} {\dfrac{d\phi_t}{da}}>k_i^{best}$$\end{small}
When the subinterval is large, that is, $d_i \to D$, local information is not reliable. $l_i$ is determined by global information.
\begin{equation}\small
    \lim_{d_i \to D} l_i=r \cdot M=r \cdot\sup_{a\in X} {\dfrac{d\phi_t}{da}}>K_{best}>k_i^{best}
\end{equation}
Local Lipschitz constants provide a {\it more accurate} polyline approximation to the original function, leading to faster convergence.

\begin{figure}[t]
\begin{center}
\centerline{\includegraphics[width=0.5\columnwidth]{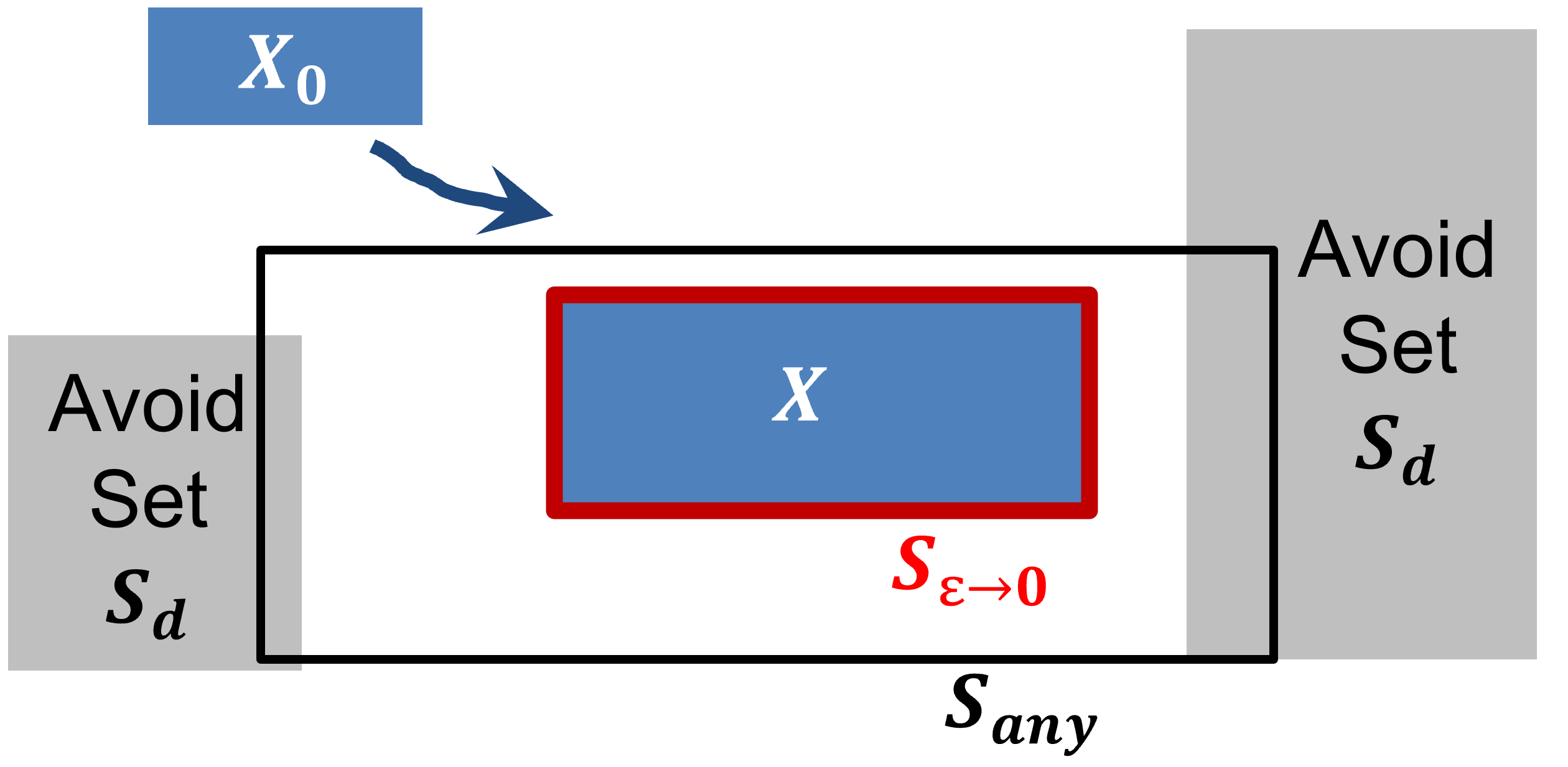}}
\caption{$X_0$ is the initial state set and $S_d$ is the avoid set. DeepNNC  can return an overestimation $S_{any}$ in any iteration. Verification is sound but not complete by $S_{any}$, and Verification is both sound and complete by $S_{\epsilon \to 0}$.}
\label{overestimation}
\end{center}
\vspace{-4mm}
\end{figure}

\subsection{Soundness and Completeness}
\begin{theorem}[Soundness]
Given a box-constrained initial input range $X_0$ and a dangerous set $S_a$, the verification of an NNCS via DeepNNC  is sound anytime. DeepNNC can return an overestimation $S$ of the real reachable set $X$ at any iteration, even it has not reached the convergence.
\end{theorem}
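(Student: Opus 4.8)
The plan is to reduce the soundness claim to a single containment statement: if at iteration $k$ DeepNNC returns the box $S=[l,u]$ (assembled coordinatewise from the one-dimensional sub-problems of the nested scheme), then $S$ contains the true reachable interval $[\inf_{x_0\in X_0}\phi(x_0,t),\ \sup_{x_0\in X_0}\phi(x_0,t)]$, i.e. $S\supseteq X$. Granting this, soundness is immediate: whenever $S\cap S_a=\emptyset$ we also have $X\cap S_a=\emptyset$, so reporting ``safe'' is never a false positive; and because the argument is made for an \emph{arbitrary} iteration $k$ it holds anytime, long before convergence. So the whole proof is really ``the per-iteration output is an over-approximation''.

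First I would establish the one-dimensional lower-bound fact. At iteration $k$ the input range is tiled by finitely many sub-intervals $D_0,\dots,D_n$ with $\bigcup_i D_i=X_0$ and $D_i=[a_i,a_{i+1}]$. For each $D_i$ the characteristic value $R_i=\tfrac{\phi_i+\phi_{i+1}}{2}+l_i\tfrac{a_i-a_{i+1}}{2}$ is exactly the minimum over $D_i$ of the lower ``saw-tooth'' envelope $x\mapsto\max\{\phi_i-l_i(x-a_i),\ \phi_{i+1}-l_i(a_{i+1}-x)\}$ determined by the two endpoint evaluations and the slope $l_i$ (the crossing point of the two lines is precisely $x_i^R$). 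Provided $l_i$ over-estimates the best local Lipschitz constant $k_i^{best}$ on $D_i$ — which is what the construction of Section ``Estimation of Lipschitz Constant'' guarantees via the slack factor $r>1$ — this envelope lies below $\phi$ on $D_i$, hence $R_i\le\min_{x\in D_i}\phi(x)$. Taking the minimum over $i$ and using that the $D_i$ tile $X_0$ gives $R_{min}^k=\min_i R_i\le\min_i\min_{x\in D_i}\phi(x)=\phi_{min}$, so the returned $R_j=R_{min}^k$ is a genuine lower bound on $\inf_{x_0\in X_0}\phi(x_0,t)$. Running the identical procedure on $-\phi$ yields a valid upper bound on $\sup_{x_0\in X_0}\phi(x_0,t)$; together $l\le\phi_{min}$ and $u\ge\sup\phi$, i.e. $S\supseteq X$.

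Second, I would lift this to the multidimensional case through the nested-optimisation scheme. By Theorem~\ref{inductive} (and the convergence analysis preceding it), each inner $d$-level sub-problem returns a value that bounds the exact inner objective $W^*$ one-sidedly up to the interval error $\epsilon_{\mathbf{y}}$, so the outermost one-dimensional problem is being solved for a function that is itself a uniform under- (resp. over-) estimate of the true objective. Propagating the one-dimensional argument of the previous paragraph through the levels, the accumulated $\epsilon_x$ and $\epsilon_{\mathbf{y}}$ terms only relax the reported bounds further, so the containment $S\supseteq X$ survives intact at every iteration; the additional passage to the limit $\epsilon\to 0$ is what upgrades this to completeness, exactly as depicted in Figure~\ref{overestimation}.

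The step I expect to be the main obstacle is the hypothesis $l_i\ge k_i^{best}$ used above: for a genuinely black-box NNCS the dynamically updated constant $L$ (and the local $l_i$) are only guaranteed to exceed $K_{best}$ (resp. $k_i^{best}$) \emph{in the limit}, so at a fixed finite iteration an under-estimated slope could in principle violate $R_i\le\min_{D_i}\phi$. I would handle this by leaning on the strictly over-approximating form of the estimator — the $r>1$ margin together with the global ``anchor'' term $M\cdot d_i/D$ in Eq.~(\ref{local_lip_cons}), which keeps $l_i$ from collapsing below the global slope while any coarse sub-interval remains — and, if a fully rigorous statement is wanted, by phrasing the soundness guarantee relative to the Lipschitz constant actually employed (optionally with a conservative rounding of the reported box). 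A secondary, more routine obstacle is the bookkeeping of $\epsilon_x$ and $\epsilon_{\mathbf{y}}$ across the nested recursion, which must be shown to widen, never shrink, the returned set $S$.
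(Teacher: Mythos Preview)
Your proposal is correct and follows essentially the same route as the paper: the paper's proof also reduces soundness to the single inequality $R_{min}^k\le\phi_{min}$, established by picking the sub-interval $[a_i,a_{i+1}]$ containing $x_{min}$ and splitting $R_i-\phi_{min}$ into two pairs of terms, each nonpositive by the Lipschitz bound $l_i$ (your saw-tooth envelope argument is the geometric paraphrase of exactly this algebra). You go further than the paper by treating the multidimensional lift and by flagging the finite-iteration validity of $l_i\ge k_i^{best}$, both of which the paper's proof silently assumes; your caveats there are well placed but do not change the core argument.
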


\begin{proof}\let\qed\relax
When we interrupt the optimisation at any iteration $k$, lower bound $R_{min}^k$ is returned for verification. We assume that the real minimal value $\phi_{min}$ is located in the sub-interval $[a_i,a_{i+1}]$, i.e. $\phi_{min}=\phi(x_{min})$ with $x_{min} \in[a_i,a_{i+1}]$.
\begin{small}
\begin{equation}
\begin{split}
 & R_i-\phi_{min}=\frac{\phi_{i}+\phi_{i+1}}{2}-l_i\frac{a_{i}-a_{i+1}}{2}-\phi_{min}
 \\=&\frac{\phi_{i}-\phi_{min}}{2} +l_i\frac{a_{i}-x_{min}}{2} +\frac{\phi_{min}-\phi_{i+1}}{2}+l_i\frac{x_{min}-a_{i+1}}{2} \le 0
\end{split}
\end{equation}
\end{small}
Thus, $R_{min}^k-\phi_{min}\le R_i - \phi_{min}\le 0$ holds.
~When using $R_{min}^k$ to estimate the output range $S_{any}$, $R_{min}^k\le  \phi_{min}$ ensures $X \subset S_{any}$, where $X$ is the real reachable set.  Given an avoid set $S_d$, if $S_{any}\cap S_d = \emptyset$, then $X\cap S_d = \emptyset$, the NNCS is safe. Thus, the approach DeepNNC is sound.~\qedsymbol{}
\end{proof}

\begin{theorem}[Completeness]
Given a box-constrained initial input range $X_0$ and a dangerous set $S_d$, verification via DeepNNC is complete only when optimisation reaches convergence with the overestimation error $\epsilon \to 0$.
\end{theorem}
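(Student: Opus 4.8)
The plan is to prove the statement in two halves. First I will show that as the tolerance $\epsilon \to 0$ the box $S = S_t(\phi,X_0,\epsilon)$ returned by DeepNNC shrinks down to the exact reachable interval $X = [\phi_{min},\phi_{max}]$ at each checked time $t$, so that ``$S\cap S_d=\emptyset$'' becomes equivalent to ``$X\cap S_d=\emptyset$'' and the verifier can neither miss an unsafe behaviour nor raise a spurious alarm on a safe one; this gives completeness \emph{in the limit}. Second I will argue that for any fixed $\epsilon>0$ the returned box is a genuine (strict) over-approximation, so there are safe NNCS instances that DeepNNC fails to certify — hence driving $\epsilon\to0$ is also \emph{necessary}, which is exactly the ``only when'' phrasing of the theorem.

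For the first half I would reuse the one-sided bound established in the soundness proof, $R_{min}^k \le \phi_{min}$, together with the termination test $\hat{\phi}_{min}^k - R_{min}^k \le \epsilon$ and the trivial fact that every sampled edge value lies above the true minimum, $\hat{\phi}_{min}^k \ge \phi_{min}$. Chaining these yields the two-sided estimate $\phi_{min}-\epsilon \le R_{min}^k \le \phi_{min}$, and applying the identical reasoning to $-\phi$ gives $\phi_{max}\le u\le \phi_{max}+\epsilon$ for the upper endpoint. By the convergence analysis (monotone, bounded behaviour of $R_{min}^k$ with $\hat{\phi}_{min}^k-R_{min}^k\to 0$, extended across coordinates through the nested scheme and Theorem~\ref{inductive}) the algorithm terminates in finitely many iterations for every $\epsilon$, so the returned endpoints satisfy $R_{min}^k\to\phi_{min}$ and $u\to\phi_{max}$ as $\epsilon\to 0$. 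Consequently $S \downarrow X$, and since $S\supseteq X$ throughout, $S\cap S_d=\emptyset$ iff $X\cap S_d=\emptyset$ in the limit.

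From here the completeness claim follows: if the NNCS is safe, i.e. $X\cap S_d=\emptyset$, then (taking $X$ and $S_d$ compact and disjoint, with separation $\rho>0$) all sufficiently small $\epsilon$ give $S\subseteq X\oplus[-\rho/2,\rho/2]$, hence $S\cap S_d=\emptyset$ and DeepNNC reports ``safe''; combined with soundness, verification succeeds on exactly the safe instances. For the ``only when'' direction I would exhibit, for any fixed $\epsilon>0$, a safe NNCS whose exact reachable interval lies within distance $\epsilon$ of $S_d$: since $S$ can be as wide as $[\phi_{min}-\epsilon,\phi_{max}+\epsilon]$, it intersects $S_d$ and DeepNNC returns ``unknown/unsafe'' although the system is safe, so completeness fails for that $\epsilon$. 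Together these show completeness holds precisely in the regime $\epsilon\to0$.

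The main obstacle I expect is the degenerate boundary case where the exact reachable set merely touches $S_d$ (zero separation, tangency or a measure-zero contact): there ``safe'' is ambiguous and the $\epsilon\to0$ limit must be stated with care — one either assumes $S_d$ is open (equivalently, that safety means strictly positive separation), or one phrases completeness as a limiting property of the whole family $\{S_t(\phi,X_0,\epsilon)\}_{\epsilon>0}$ rather than a property of a single finite run. A secondary technical point is justifying that the endpoint convergence $S\downarrow X$ can be made uniform over the finitely many inspected time points $t_1,\dots,t_n$ and through the recursive multidimensional nesting; this is where Theorem~\ref{inductive} and the local Lipschitz estimate of Section~\ref{local_lip_estimation} carry the weight, since they are what guarantee $\hat{\phi}_{min}^k-R_{min}^k$ can be forced below any prescribed $\epsilon$ in finitely many steps.
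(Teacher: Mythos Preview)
Your proposal is correct and follows the same core approach as the paper: invoke the convergence analysis to get $R_{min}^k\to\phi_{min}$ (hence $S\to X$) as $\epsilon\to 0$, so that in the limit $S\cap S_d\neq\emptyset$ coincides with $X\cap S_d\neq\emptyset$. Your treatment is in fact more thorough than the paper's three-line sketch --- you supply the explicit two-sided sandwich $\phi_{min}-\epsilon\le R_{min}^k\le\phi_{min}$, the positive-separation argument for the ``safe $\Rightarrow$ certified'' direction, and a counterexample establishing the ``only when'' clause, none of which the paper spells out.
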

\begin{proof}\let\qed\relax
As demonstrated in the convergence analysis, with iteration number $k\to \infty$ the optimisation can achieve convergence with error $\epsilon \to 0$. In such circumstances, the lower bound $R_{min}^k \to \phi_{min}$ results in an estimation $S_{\epsilon \to 0} \to X_0$ with ignorable estimation error. If $S_{\epsilon \to 0}\cap S_d \neq \emptyset$, then $X\cap S_d \neq \emptyset$, the NNCS is not safe. ~~~\qedsymbol{}
\end{proof}


\section{Experiments}
\label{experiments}
We first compare DeepNNC with some state-of-the-art baselines. 
We then analyse the influences of parameter $k_{max}$ and $\epsilon$. Finally, we provide a case study on an airplane model. An additional case study of an adaptive cruise control system is presented in {\bf Appendix D}. 

\subsection{Comparison with State-of-the-art Methods}

We test DeenNNC and baseline methods on six benchmarks. See {\bf Appendix-E} for the details of the benchmarks. We compare our method with ReachNN*~\cite{fan2020reachnn}, Sherlock~\cite{sherlock} and Versig 2.0~\cite{ivanov2021verisig} in terms of efficiency. Regarding the accuracy comparison, we compare our method with Verisig~\cite{ivanov2019verisig} instead of Sherlock~\cite{sherlock}, since the results of Sherlock loose fast with the increase of control steps.

\begin{table}[t]
\centering
\caption{Computation time to estimate the reachable set {\it Tools are tested on six benchmarks and four activation functions. Results are not given if a method is not applicable or reaches the time limitation.}}
\label{time consumption}
\footnotesize
\setlength{\tabcolsep}{0.5mm}
\scalebox{0.8}{
\begin{tabular}{|c|c|c|c|c|c|}
\hline
 & Controller & \textbf{DeepNNC(s)} & ReachNN*(s) & Sherlock(s) & Verisig2.0(s) \\ \hline
\multirow{4}{*}{1} & ReLU       & \textbf{0.62}          & 26       & 42       & -          \\ \cline{2-6} 
                   & Sigmoid    & \textbf{0.45}          & 75       & -        & 47         \\ \cline{2-6} 
                   & Tanh       & \textbf{0.62}        & 76  & -        & 46         \\ \cline{2-6} 
                   & ReLU+Tanh  & \textbf{0.60}         & 71       & -        & -          \\ \hline
\multirow{4}{*}{2} & ReLU       & \textbf{0.42}        & 5        & 3        & -          \\ \cline{2-6} 
                   & Sigmoid    & \textbf{0.50}        & 13       & -        & 7          \\ \cline{2-6} 
                   & Tanh       & \textbf{0.52}        & 73       & -        & unknown    \\ \cline{2-6} 
                   & ReLU+Tanh  & \textbf{0.54}        & 8        & -        & -          \\ \hline
\multirow{4}{*}{3} & ReLU       & \textbf{0.50}        & 94       & 143      & -          \\ \cline{2-6} 
                   & Sigmoid    & \textbf{0.55}        & 146      & -        & 44         \\ \cline{2-6} 
                   & Tanh       & \textbf{0.55}        & 137      & -        & 38         \\ \cline{2-6} 
                   & ReLU+Tanh  & \textbf{0.49}        & unknown  & -        & -          \\ \hline
\multirow{4}{*}{4} & ReLU       & \textbf{0.86}        & 8        & 21       & -          \\ \cline{2-6} 
                   & Sigmoid    & \textbf{1.02}        & 22       & -        & 11         \\ \cline{2-6} 
                   & Tanh       & \textbf{1.08}        & 21       & --       & 10         \\ \cline{2-6} 
                   & ReLU+Tanh  & \textbf{0.98}        & 12       & -        & -          \\ \hline
\multirow{4}{*}{5} & ReLU       & \textbf{0.70}        & 103      & 15       & -          \\ \cline{2-6} 
                   & Sigmoid    & \textbf{0.70}        & 27       & -        & 190        \\ \cline{2-6} 
                   & Tanh       & \textbf{0.71}        & unknown  & -        & 179        \\ \cline{2-6} 
                   & ReLU+Tanh  & \textbf{0.69}        & unknown  & -        & -          \\ \hline
\multirow{4}{*}{6} & ReLU       & \textbf{3.69}        & 1130     & 35       & -          \\ \cline{2-6} 
                   & Sigmoid    & \textbf{3.62}        & 13350    & -        & 83         \\ \cline{2-6} 
                   & Tanh       & \textbf{3.65}        & 2416     & -        & 70         \\ \cline{2-6} 
                   & ReLU+Tanh  & \textbf{3.64}        & 1413     & -        & -          \\ \hline
\end{tabular}}
\end{table}
The time consumption of different approaches for estimating the reachable set at a predefined time is demonstrated in Table \ref{time consumption} and Figure \ref{average_time}.
Our method can be applied to all benchmarks and has relatively better efficiency, especially for low-dimensional problems.

\begin{table}[!h]
\centering
\caption{Area size of reachable sets. We compare the area size of the polygon reachable set at a predefined point. A smaller area size indicates a tighter estimate. The percentage is calculated by reachable set area size of (baseline - ours) / baseline, i.e., improvement of tightness of estimation.}
\label{reachable set}
\begin{footnotesize}
\setlength{\tabcolsep}{2.2mm}
\scalebox{0.8}{
\begin{tabular}{|c|c|c|c|c|c|}
\hline
                   & \textbf{DeepNNC}   & Search     & Versig2.0   & Verisig    & ReachNN* \\ \hline
\multirow{2}{*}{\begin{tabular}[c]{@{}c@{}}B1\\Sigmoid\end{tabular}} & \textbf{7.7685}  &0.303     & 21      & NA         & 145   \\ \cline{2-6} 
                             & \textbackslash{} & \textbackslash{} & 63.01\%     & NA         & 94.64\%  \\ \hline
\multirow{2}{*}{\begin{tabular}[c]{@{}c@{}}B2\\Sigmoid\end{tabular}} & \textbf{12}     &5.7261      & 52      & 125     & 114   \\ \cline{2-6} 
                             & \textbackslash{} & \textbackslash{} & 76.92\%     & 90.4\%     & 89.47\%  \\ \hline
\multirow{2}{*}{\begin{tabular}[c]{@{}c@{}}B5\\Sigmoid\end{tabular}} & \textbf{3.4762}  &1.7139    & 3.6531  & 8.4109 & 98   \\ \cline{2-6} 
                             & \textbackslash{} &\textbackslash{} & 4.8\%       & 58.67\%    & 96.45\%  \\ \hline
\multirow{2}{*}{\begin{tabular}[c]{@{}c@{}}B5\\Tanh\end{tabular}}    & \textbf{3.4254}  &1.2747    & 3.8021  & 9.1202  & 78   \\ \cline{2-6} 
                             & \textbackslash{} & \textbackslash{} &7.28\%      & 62.44\%    & 95.6\%   \\ \hline
\multirow{2}{*}{\begin{tabular}[c]{@{}c@{}}B6\\Sigmoid\end{tabular}} & \textbf{6.2444}  &3.5401     & 6.5366 & 18.324  & 142   \\ \cline{2-6} 
                             & \textbackslash{} & \textbackslash{} & 4.47\%      & 65.92\%    & 95.6\%   \\ \hline
\multirow{2}{*}{{\begin{tabular}[c]{@{}c@{}}B6\\Tanh\end{tabular}}}    & \textbf{6.6674}  &3.636      & 7.2116   & 20.343  & 185   \\ \cline{2-6} 
                             & \textbackslash{} & \textbackslash{}  & 7.55\%      & 67.23\%    & 96.396\% \\ \hline
\end{tabular}}
\end{footnotesize}
\end{table}

\begin{figure}[!h]
\centering
\includegraphics[width=0.35\textwidth]{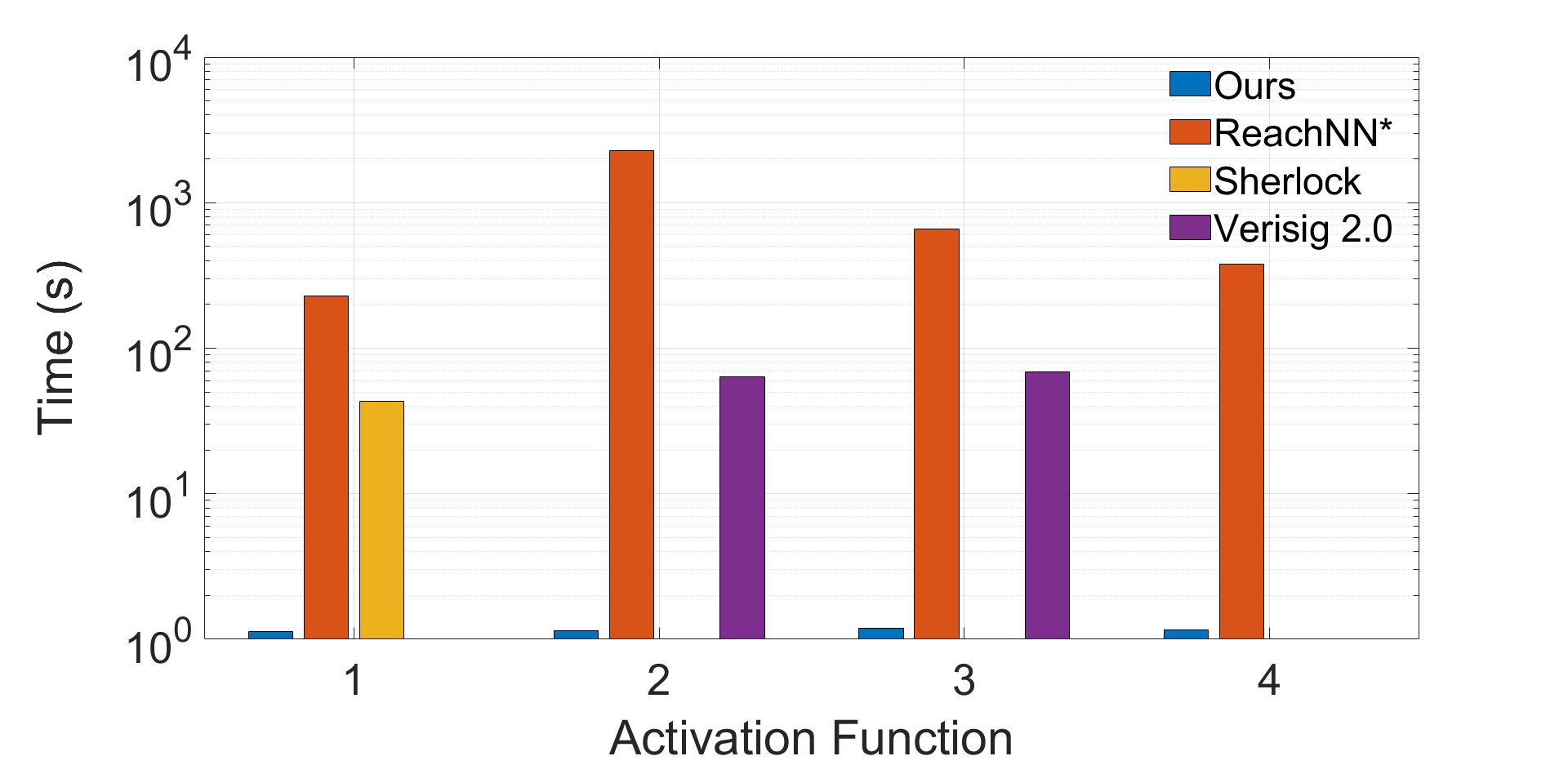}
\caption{Average time to verify different controllers. 1. ReLU 2. Sigmoid 3. Tanh 4. ReLU+Tanh. DeepNNC is {\bf 768 times} faster than ReachNN*,  {\bf 37 times} faster than Sherlock, {\bf 56 times} faster than Verisig 2.0.}
\label{average_time}
\vspace{-4mm}
\end{figure}
\begin{figure}[!h]
\vspace{-2mm}
\centering
\includegraphics[width=0.23\textwidth]{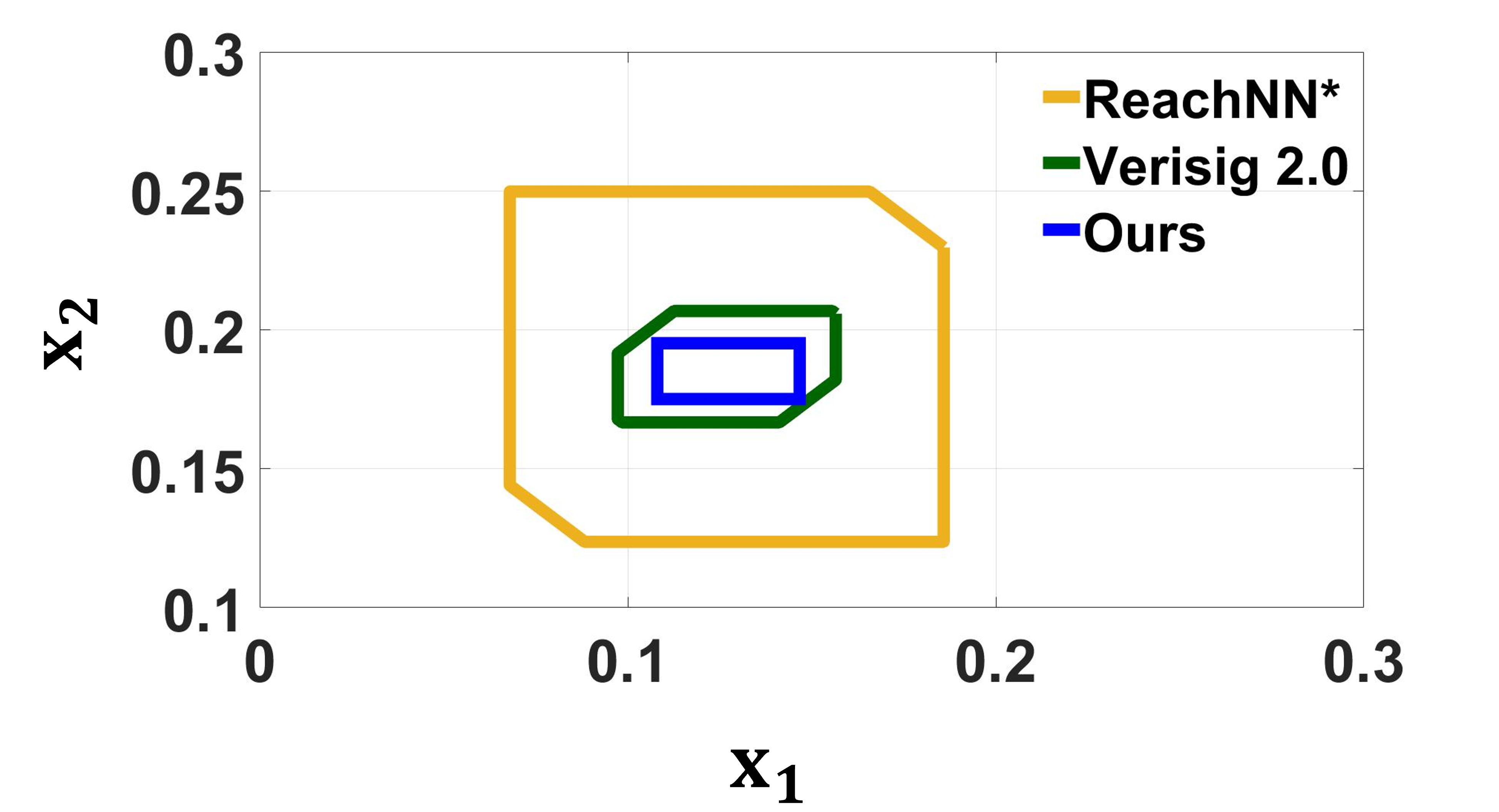}
\includegraphics[width=0.23\textwidth]{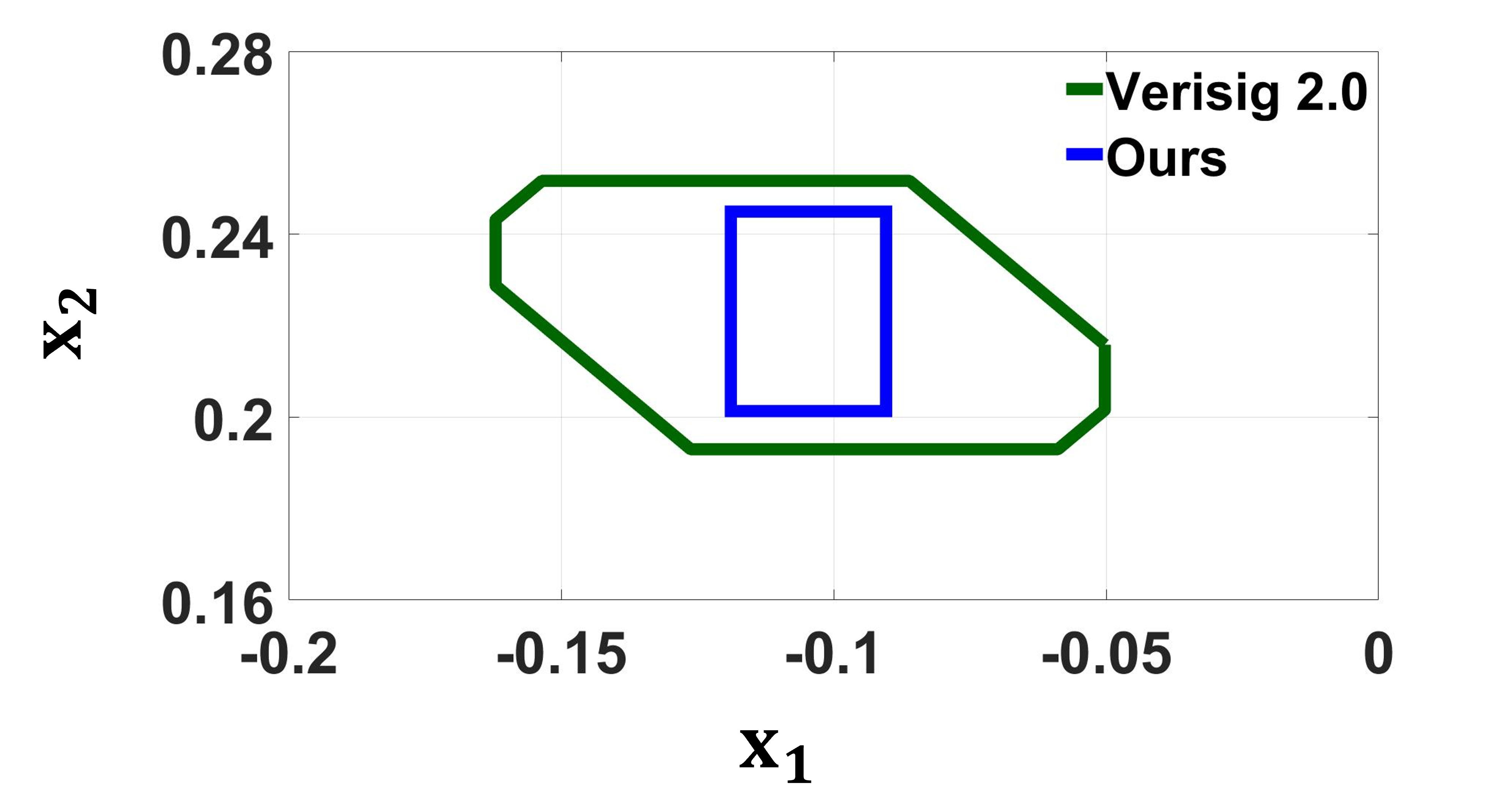}
\vspace{-3mm}
\caption{Reachable sets at a predefined time point estimated by different approaches. The estimated reachable set at a time point is a polygon. The size of the area of the reachable set indicates the precision of the estimation results.}
\label{final_reachable_set}
\vspace{-3mm}
\end{figure}

\begin{figure}[!ht]
\centering
\includegraphics[width=0.45\textwidth]{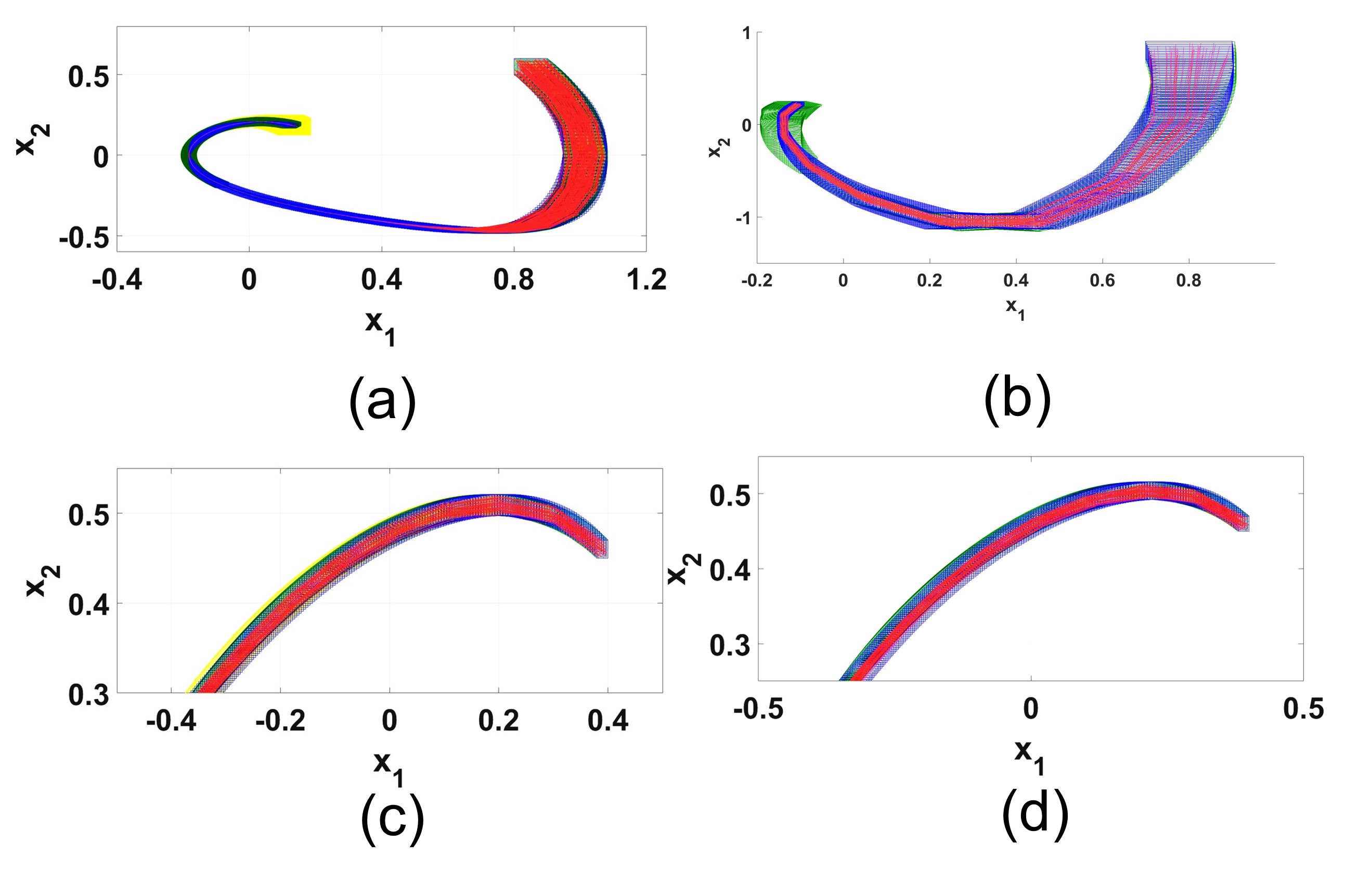}
\vspace{-4mm}
\caption{(a)B1 Sigmoid (b) B2 Sigmoid (c) B5 Sigmoid (d) B5 Tanh. Red trajectories are the simulated results. Blue reachable sets are generated from our approach. The yellow and green reachable sets are from the baseline methods.}
\label{reachable_set_estimation}
\vspace{-4mm}
\end{figure}

We compare the accuracy of the methods by calculating the area size of the reachable set at a predefined time point. The reachable set at the same time point estimated by different approaches is demonstrated in Figure \ref{final_reachable_set}. The comparison of area sizes of reachable sets is presented in Table \ref{reachable set}. Since directly obtaining an analytical ground-truth reachable set is difficult in NNCS, we use a grid-based exhaustive search with 1,000 points to approximate the ground-truth reachable set. The reachable states at the given time points are all located in the evaluated reachable set. We add a convex hull that covers all the reachable points. And we calculate and compare the area size with the reachable set.   The area size of DeepNNC is slightly larger than the exhaustive search, but consistently smaller than other baselines, revealing that DeepNNC can cover all the reachable examples with a tight estimation. 
Figure~\ref{reachable_set_estimation} shows the reachable sets with the evolving of the control timestep and the simulated trajectories (red). Reachable sets of DeepNNC (blue) can cover the trajectories with a small overestimation error.

\subsection{Impact of $\epsilon$ and $k_{max}$}
Two parameters $\epsilon$ and $_{max}k$ are involved in our approach. $\epsilon$ indicates the allowable error, and $k_{max}$ represents the maximum number of iterations in each dimension. The iteration stops either when the difference of upper bound (red square) and lower bound (blue square) is smaller than $\epsilon$ or the iteration number in one dimension reaches $k_{max}$. 
In Figure \ref{diff_k}, we fix $\epsilon=0.0005$ and change $k_{max}=3,5,10,10000$. The estimation becomes tighter with the larger iteration number. In Figure~\ref{diff_delta} we study the influence of $\epsilon$ by fixing the maximum iteration number $k_{max}=10^4$ and changing $\epsilon=0.5,0.05,0.005,0.0005$. The accuracy of the estimate increases with smaller $\epsilon$.


\begin{figure}[t]
\vspace{-2mm}
\centering
\includegraphics[width=0.45\textwidth]{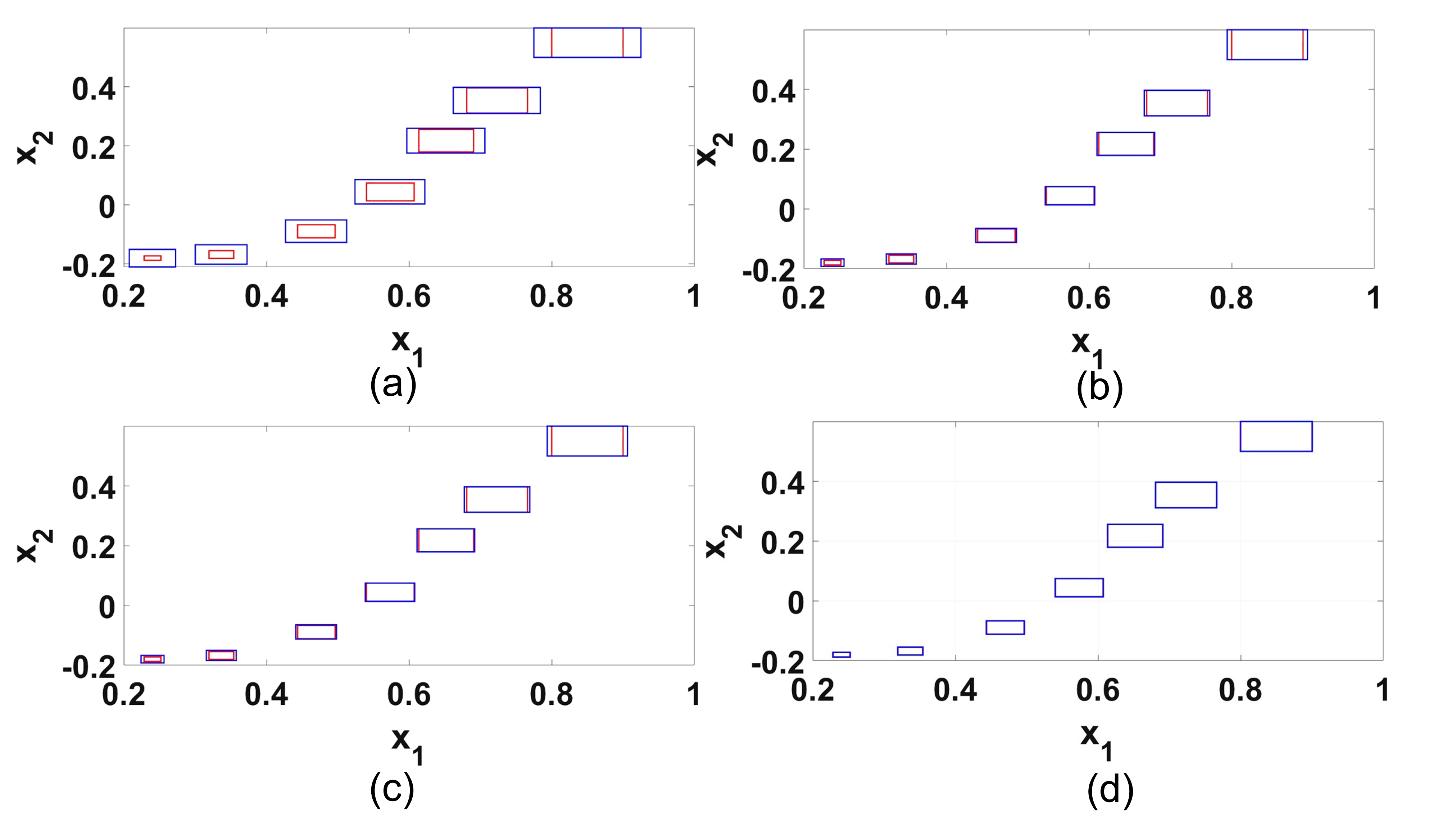}
\vspace{-4mm}
\caption{Reachable sets of B3 with ReLU at $t=0.01s$, 0.1s, 0.2s, 0.4s, 1s, 3s, 6s, with $\epsilon=0.0005$ and (a) $k_{max}=3$; (b) $k_{max}=5$; (c) $k_{max}=10$; (d) $k_{max}=10^4$}\label{diff_k}
\end{figure}


\begin{figure}[!ht]
\centering
\includegraphics[width=0.45\textwidth]{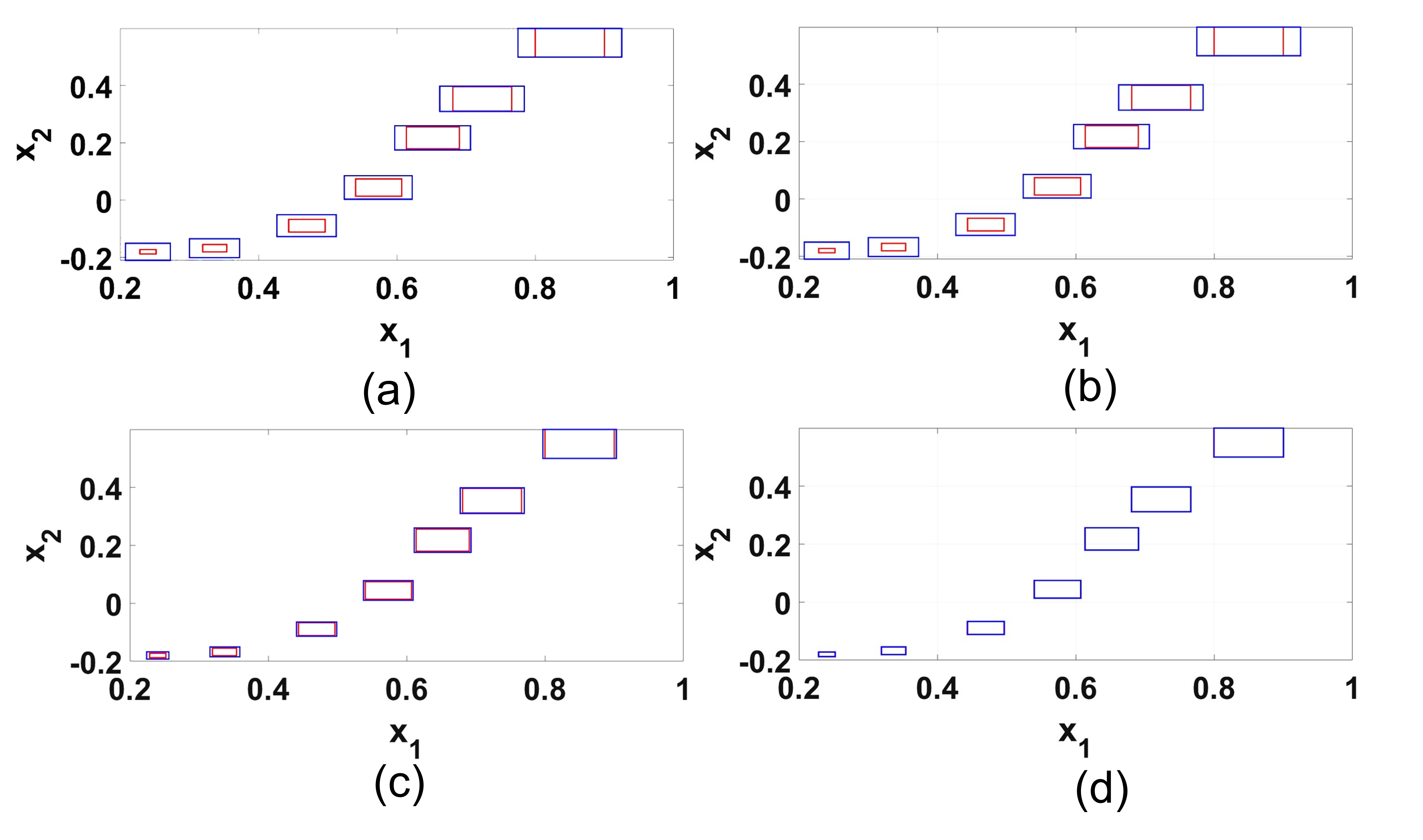}
\vspace{-4mm}
\caption{Reachable sets of B3 with ReLU at $t=0.01s$, 0.1s, 0.2s, 0.4s, 1s, 3s, 6s, with $k_{max}=10^4$ and (a) $\epsilon=0.5$; (b) $\epsilon=0.05$; (c) $\epsilon=0.005$; (d) $\epsilon=0.0005$}\label{diff_delta}
\end{figure}

\begin{figure}[!h]
\centering
\includegraphics[width=0.23\textwidth]{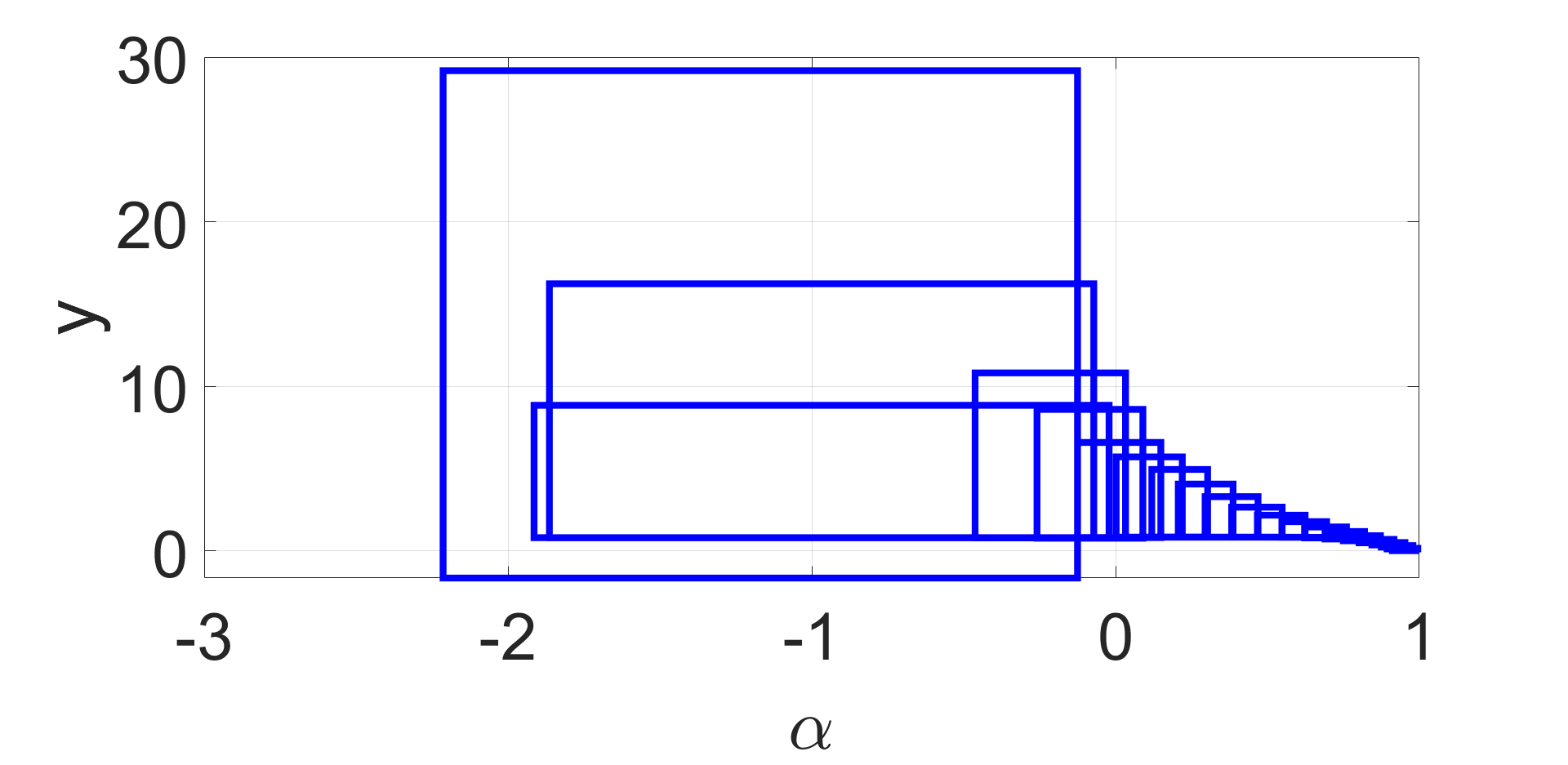}
\includegraphics[width=0.23\textwidth]{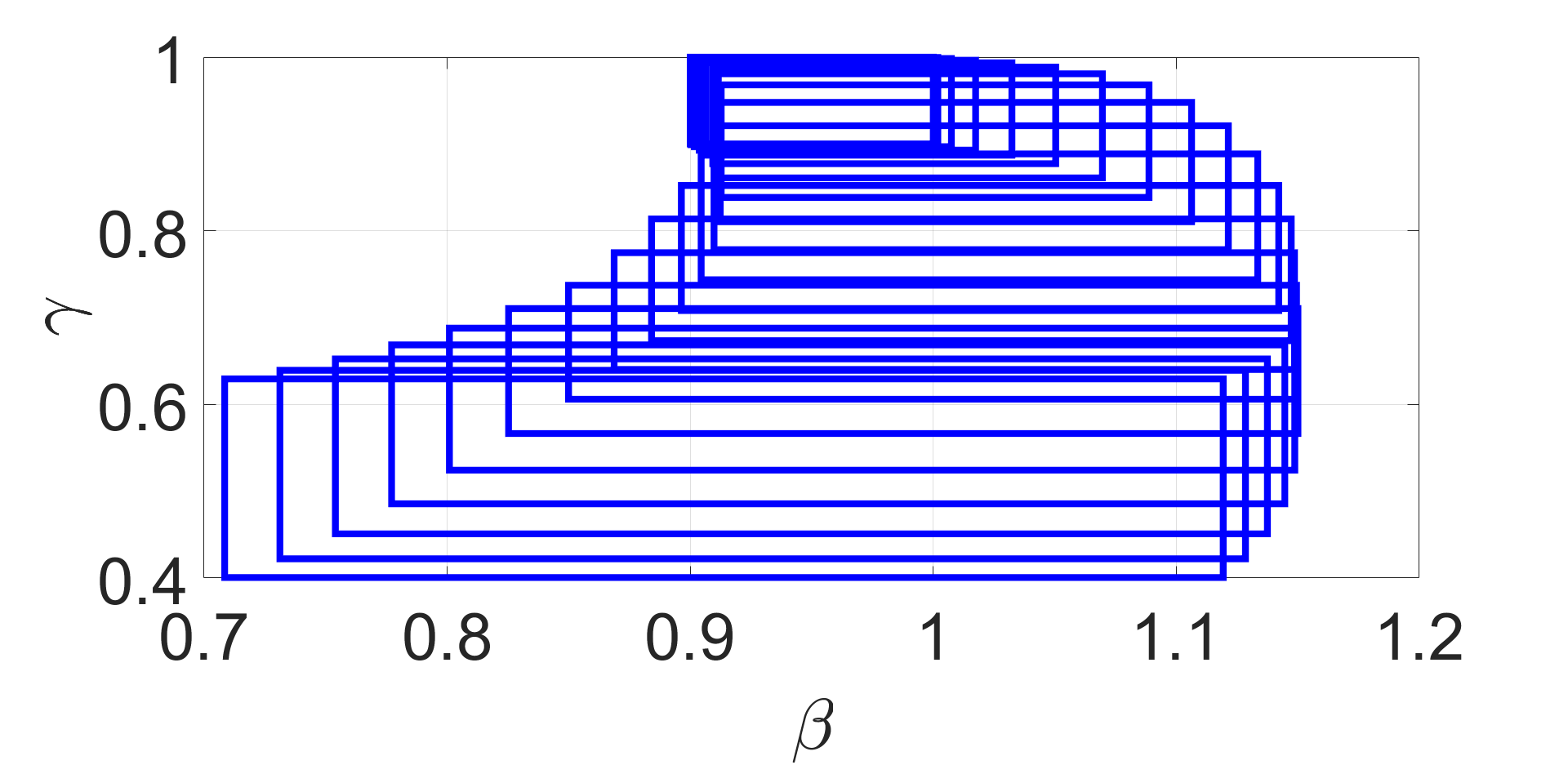}
\vspace{-3mm}
\caption{Reachable sets of $\alpha-y$ and $\beta$ and $\gamma$ at $t=[0s, 2s]$ with an initial subset $ [u,v,w,\alpha, \beta, \gamma]=[0.9,1]^6$}
\label{airplane_result}
\vspace{-4mm}
\end{figure}


\subsection{Case Study: Flying Airplane}
We analyse a complex control system of a flying airplane, which is a benchmark in ARCH-COMP21 \cite{ARCH21:ARCH_COMP21_Category_Report_Artificial}. The system contains 12 states $[x,y,z,u,v,w,\alpha,\beta,\gamma,r,p,q]$ and the technical details are in {\bf Appendix-F}. Initial states are $ x=y=z=r=p=q=0,
    [u,v,w,\alpha, \beta, \gamma]=[0,1]^6.$ 
The goal set is $y\in [-0.5,0,5]$ and $[\alpha, \beta, \gamma]=[-1,1]^3$ for  $t<2s$.
The controller here takes an input of 12 dimensions and generates a six-dimensional control input $F_x, F_y, F_z, M_x, M_y, M_z$. 
For simplicity, we choose a small subset of the initial input $[u,v,w,\alpha, \beta, \gamma]=[0.9,1]^6$, the estimated reachable sets of $\alpha$ and $y$ in the time range $[0s, 2s]$ are presented in figure \ref{airplane_result} (a). Both $\alpha$ and $y$ are above the safe range. Figure \ref{airplane_result} shows the reachable set of $\alpha$ and $\gamma$, which is also above the safe region. The airplane NNCS is unsafe.


\section{Conclusion}

We develop an NNCS verification tool, DeepNNC, that is applicable to a wide range of neural network controllers, as long as the whole system is Lipschitz-continuous. We treat NNCS verification as a series of optimisation problems, and the estimated reachable set bears a tight bound even in a long control time horizon. The efficiency of DeepNNC is mainly influenced by two factors, the Lipschitz constant estimation and the dimension of the input. For the first, we have adopted dynamic local Lipschitzian optimisation to improve efficiency. Regarding the high-dimensional reachability problem, a possible solution is to transform the high-dimension problem into a one-dimensional problem using a space-filling curve~\cite{lera2015deterministic}. This idea can be explored in future work. 

\bigskip
\section{Acknowledgements}
This work is supported by Partnership Resource Fund of ORCA Hub via the UK EPSRC under project [EP/R026173/1]. 

\balance
\bibliography{aaai23}

\end{document}